\def\eqref#1{equation~\ref{#1}}
\def\1{\bm{1}}
\DeclareMathAlphabet{\mathsfit}{\encodingdefault}{\sfdefault}{m}{sl}
\SetMathAlphabet{\mathsfit}{bold}{\encodingdefault}{\sfdefault}{bx}{n}
\DeclareMathOperator*{\argmax}{arg\,max}
\def\th@plain{%
  \thm@notefont{}
  \itshape 
}
\def\th@definition{%
  \thm@notefont{}
  \normalfont 
}
\newtheorem{theorem}{Theorem}
\newtheorem{lemma}{Lemma}
\theoremstyle{definition}
\newtheorem{definition}{Definition}
\theoremstyle{fact}
\newtheorem{fact}{Fact}
\theoremstyle{remark}
\newtheorem{remark}{Remark}
\title{Constructing Adversarial Examples for Vertical Federated Learning: Optimal Client Corruption through Multi-Armed Bandit}
\author{Duanyi, Yao\\
HKUST\\
\texttt{dyao@connect.ust.hk}\\
\And
Songze, Li\\
Southeast University\\
\texttt{songzeli@seu.edu.cn}\\
\And
Ye, Xue\\
Shenzhen Research Institute of Big Data, CUHK(SZ)\\
\texttt{xueye@cuhk.edu.cn}
\And
Jin, Liu\\
HKUST(GZ)\\
\texttt{jliu577@connect.hkust-gz.edu.cn}
}
\begin{document}

\maketitle

\begin{abstract}


Vertical federated learning (VFL), where each participating client holds a subset of data features, has found numerous applications in finance, healthcare, and IoT systems. However, adversarial attacks, particularly through the injection of adversarial examples (AEs), pose serious challenges to the security of VFL models. In this paper, we investigate such vulnerabilities through developing a novel attack to disrupt the VFL inference process, under a practical scenario where the adversary is able to \emph{adaptively corrupt a subset of clients}. We formulate the problem of finding optimal attack strategies as an online optimization problem, which is decomposed into an inner problem of adversarial example generation (AEG) and an outer problem of corruption pattern selection (CPS). Specifically, we establish the equivalence between the formulated CPS problem and a multi-armed bandit (MAB) problem, and propose the Thompson sampling with Empirical maximum reward (E-TS) algorithm for the adversary to efficiently identify the optimal subset of clients for corruption. The key idea of E-TS is to introduce an estimation of the expected maximum reward for each arm, which helps to specify a small set of \emph{competitive arms}, on which the exploration for the optimal arm is performed. This significantly reduces the exploration space, which otherwise can quickly become prohibitively large as the number of clients increases. We analytically characterize the regret bound of E-TS, and empirically demonstrate its capability of efficiently revealing the optimal corruption pattern with the highest attack success rate, under various datasets of popular VFL tasks.

\end{abstract}

\maketitle

\section{Introduction}
 Federated learning (FL)~\cite{li2020federated} is a distributed learning paradigm that enables multiple clients to collaboratively train and utilize a machine learning model without sharing their data. Conventionally, most FL research considers the Horizontal FL (HFL) setting, where clients hold different data samples with the same feature space. In contrast, vertical FL (VFL) tackles the secnarios where clients have identical samples but disjoint feature spaces. A typical VFL model comprises a top model maintained by a server and multiple bottom models, one at each participating client. During the inference process, each client computes the local embedding of data features using its bottom model and uploads it to the server through a communication channel for further prediction.
Due to its advantage of incorporating attributes from diverse information sources, VFL has found promising applications in healthcare systems~\cite{poirot2019split}, e-commerce platforms~\cite{mammen2021federated}, and financial systems~\cite{liu2022vertical}. 
VFL inference has also been applied to the Internet of Things (IoT) scenarios (also known as collaborative inference~\cite{liucopur,ko2018edge}), where sensor data with distinct features are aggregated by a fusion center for further processing. A recent example is to utilize multi-modal image data from sensors for remote sensing image classification~\cite{shi2022multifeature}.

 Despite its widespread applications, ML models have been shown vulnerable to adversarial examples (AEs)~\cite{goodfellow2014explaining}, which are modified inputs designed to cause model misclassification during the inference. Constructing AEs in the VFL setting presents unique challenges compared to the conventional ML setting. Specifically, we consider a third-party adversary who can access, replay, and manipulate messages on the communication channel between a client and the server. (For simplicity, we use client $x$ to denote the communication channel between client $x$ and the server throughout the paper). However, it can only corrupt a subset of clients due to resource constraints, like computational and network bandwidth~\cite{lesi2020integrating, li2018false, wu2018optimal}.
 Also, the server's top model and the other uncorrupted clients' embeddings and models are unknown to the adversary. Under this setting, the adversary aims to generate AEs by adding manipulated perturbations to embeddings in the corrupted clients, such that the attack success rate (ASR) over a sequence of test samples is maximized.
 Prior works have proposed methods to generate AEs for VFL inference, for a \emph{fixed corruption pattern} (i.e., the set of corrupted clients remains fixed throughout the attack). In~\cite{pang2022attacking}, a finite difference method was proposed
 to generate adversarial dominating inputs, by perturbing the features of a fixed corrupted client, to control the inference result, regardless of the feature values of other clients; another work~\cite{qiu2022hijack} employed zeroth-order optimization (ZOO) to find the optimal perturbation on the uploaded embedding of a malicious client. Meanwhile, these attacks also make assumptions on certain prior knowledge at the adversary, e.g., the adversary can obtain a subset of complete test samples in advance. 
 
 In this paper, we consider an adversary who lacks prior knowledge on test data or VFL models, but can \textit{adaptively adjust its corruption pattern} based on the effectiveness of the previous attacks, subject to a maximum number of clients that can be corrupted. For a VFL inference process of $T$ rounds, we formulate the attack as an online optimization problem, over $T$ corruption patterns, one for each inference round, and the embedding perturbations for the test samples in each round. To solve the problem, we first decompose it into two sub-problems: the inner \textit{adversarial examples generation (AEG)} problem and the outer \textit{corruption pattern selection (CPS)} problem. For the AEG problem with a fixed corruption pattern, we apply the natural evolution strategy (NES) to estimate the gradient for perturbation optimization. For the outer CPS problem, we establish its equivalence with arm selection in a multi-armed bandit (MAB) problem, with the reward being the optimal ASR obtained from the AEG problem. Given the unique challenge that the total number of arms scale combinatorially with the number of clients, we propose a novel method named Thompson sampling with empirical maximum reward (E-TS), enabling the adversary to efficiently identify the optimal corruption pattern. The key idea is to limit the exploration within the competitive set, which is defined using the expected maximum reward of each arm. Compared with plain Thompson sampling (TS) for the MAB problem~\cite{agrawal2012analysis}, E-TS additionally maintains the empirical maximum reward for each arm, which are utilized to estimate the underlying competitive arms, within which TS is executed to select the corruption pattern. 

We theoretically characterize a regret bound of $(N-D)\mathcal{O}(1) + D \mathcal{O}(\log(T))$ for the proposed E-TS algorithm, where $N$ is the number of arms and $D$ is the number of competitive arms. This demonstrates the advantage of E-TS over the plain TS, especially for a small number of competitive arms.
 We also empirically evaluate the performance of the proposed attack on datasets with four major types of VFL tasks. In all experiments, the proposed attack uniformly dominates all baselines with fastest convergence to the optimal corruption pattern with the highest ASR. For the proposed attack, we further conduct extensive experiments to evaluate its effectiveness under various combinations of system parameters and the design parameter, and common defense strategies against AEs.

\section{Preliminaries}

{\bf VFL inference.} A VFL system consists of a central server and $M$ clients. Each client $m \in [M]$ possesses a subset of disjoint features $\boldsymbol{x}_m$ and a corresponding bottom model $\boldsymbol{f}_m$, where $[M]$ denotes the set $\{1, \ldots, M\}$. The central server maintains a top model $\boldsymbol{f}_0$. Given a test sample $\boldsymbol{X} = [\boldsymbol{x}_1, \ldots, \boldsymbol{x}_M]$, VFL inference is carried out in two stages. First, each client $m$ computes a local embedding $\boldsymbol{h}_m = \boldsymbol{f}_m(\boldsymbol{x}_m)$ using its bottom model, and uploads it to the server through a communication channel for querying a label. In the second stage, the server aggregates the embeddings from all clients and computes the predicted result $\boldsymbol{p} = \boldsymbol{f}_0([\boldsymbol{h}_1, \ldots, \boldsymbol{h}_M])\in \mathbb{R}^{c}$, which is a probability vector over $c$ classes. The server broadcasts $\boldsymbol{p}$ to all the clients, who then obtain the predicted label $\hat{y} (\boldsymbol{p})$, where $\hat{y}(\cdot)$ returns the class with the highest probability. To enhance robustness against system impairments, such as dropouts in embedding transmission, the system permits repeated queries for the same test sample, with a maximum limit of $Q$ times. The inference process operates in an online mode, such that test samples are received continuously in a streaming fashion.

{\bf Multi-armed bandit.} A multi-armed bandit (MAB) problem consists of $N$ arms. Each arm $k \in [N]$ corresponds to a random reward following an unknown distribution with mean $\mu_k$. The bandit is played for $T$ rounds. In each round $t \in [T]$, one of the $N$ arms, denoted by $k(t)$, is pulled. The pulled arm yields a random reward $r_{k(t)}(t)$ supported in the range $[0,1]$, 
which is i.i.d. from repeated plays of the same arm and observed by the player. The player must decide which arm to pull at each round $t$, i.e., $k(t)$, based on the rewards in previous rounds, to maximize the expected cumulative reward at round $T$, expressed as $\mathbb{E}[\sum_{t=1}^T \mu_{k(t)}]$, where $\mu_{k(t)} = \mathbb{E}_t[r_{k(t)}(t)]$. Assuming there exists an optimal arm with the highest mean reward $\mu^*$, the problem is equivalent to minimizing the expected regret $\mathbb{E}[R(T)]$, which is defined as follows:
\begin{equation}\label{regret}
     \mathbb{E} [R(T)] = \mathbb{E} \left[\sum_{t=1}^T (\mu^* - \mu_{k(t)})\right] = \mathbb{E} \left[\sum_{k=1}^N n_k(T)\Delta_k\right],
\end{equation}
where $n_k(T)$ denotes the number of times pulling arm $k$ in $T$ rounds, and $\Delta_k = \mu^*-\mu_k$ denotes the mean reward gap between the optimal arm and arm $k$.
\vspace{-3mm}
\section{Threat Model}
\vspace{-3mm}
{\bf Goal of the adversary.} We consider two types of attacks: targeted attack and untargeted attack. For the targeted attack with some target label $y_v$, the adversary aims to corrupt the samples whose original prediction is not $y_v$, making the top model output $\hat{y} = y_v$. 
For instance in a lending application, the adversary might set $y_v$ to ``lending'' to secure a loan to an unqualified customer. For the untargeted attack with some label $y_u$, the adversary would like to corrupt the samples whose original prediction is $y_u$, making the top model output $\hat{y} \neq y_u$. 
Note that the conventional untargeted attack~\cite{mahmood2021back} is a special case of the one considered here, when setting $y_u$ as the true label of the attacked samples. 

\textbf{Metric.} The attack's effectiveness is measured by \emph{attack success rate} (ASR), which is defined as $ASR^s_v =  \frac{\sum_{i=1}^s \mathbbm{1}(\hat{y}(\boldsymbol{p}_i) = y_v)}{s}$ and $ASR_u^s = \frac{\sum_{i=1}^s \mathbbm{1}(\hat{y}(\boldsymbol{p}_i) \neq y_u)}{s}$ for targeted and untargeted attack, respectively, where $s$ is the number of samples to be attacked, $\boldsymbol{p}_i$ is the probability vector of test sample $i$, and $\mathbbm{1}(\cdot)$ is the indicator function. 


{\bf Capability of the adversary.} We consider an adversary as a third party in VFL inference, who can access, replay, and manipulate messages on the communication channel between two endpoints. This scenario stems from a man-in-the-middle (MITM) attack~\cite{conti2016survey, wang2020man}, e.g., Mallory can open letters sent from Bob to Alice and change or resend their contents before handing
over the letter to Alice. 
In VFL inference, a communication channel is established between each client and the server, through which embeddings and predictions are exchanged. The adversary can choose to corrupt any specific channel, e.g., client 1 (for simplicity, we use client $x$ to denote the communication channel between client $x$ and the server). However, due to resource constraints like computational power and network bandwidth (see, e.g.,~\cite{lesi2020integrating, wang2016recent, wu2018optimal}), the adversary can corrupt at most $C \le M$ clients. 
Formally, for a test sample $i$, the adversary can perturb the embeddings of up to $C$ clients, denoted as $\boldsymbol{h}_{i,a}$ with $|\boldsymbol{h}_{i,a}| \leq C$, to obtain $\Tilde{\boldsymbol{h}}_{i,a}$ such that $\|\Tilde{\boldsymbol{h}}_{i,a} - \boldsymbol{h}_{i,a}\|_{\infty}\leq \beta (ub_i - lb_i)$, where $ub_i$ and $lb_i$ represent the maximum and minimum values of the elements in $\boldsymbol{h}_{i,a}$ respectively, and $\beta \in [0,1]$ is the perturbation budget of some simple magnitude-based anomaly detector.

\textbf{Adaptive corruption.} In the context of online inference, we focus on a class of powerful adversaries capable of \emph{adaptively} adjusting their corruption patterns. In each attack round, the adversary perturbs the embeddings in the corrupted clients for a batch of test samples. In subsequent attack rounds, the sets of corrupted clients can be adjusted subject to the constraint $C$, exploiting feedbacks on attack performance from previous rounds.

\section{Problem Definition}

The attack proceeds in $T$ rounds. In each attack round $t\in[T]$, the adversary seeks to perturb a batch of $B^t$ test samples following a corruption pattern $\mathcal{C}^t = \{a_1,\ldots, a_C\}$, where $a_j, j\in [C]$, denotes the index of the corrupted client. More precisely, given the embeddings of a test sample $i \in [B^t]$, denoted as $\boldsymbol{h}^t_i = [\boldsymbol{h}^t_{i,1},\ldots,\boldsymbol{h}^t_{i,M}]$, where 
$\boldsymbol{h}^t_{i,m},m\in[M]$, represents the embedding vector of client $m$, we partition $\boldsymbol{h}^t_i$ into the adversarial part $\boldsymbol{h}_{i,a}^t = [\boldsymbol{h}_{i,a_1}^t,\ldots,\boldsymbol{h}_{i,a_C}^t]$, and the benign part $\boldsymbol{h}_{i,b}^t$, according to $\mathcal{C}^t$.
The adversary crafts a perturbation $\boldsymbol{\eta}^t_i = [\boldsymbol{\eta}^t_{i,a_1},\ldots,\boldsymbol{\eta}^t_{i,a_C}]$ with $\|\boldsymbol{\eta}^t_i\|_{\infty}\leq \beta(ub_i-lb_i)$, and adds it to $\boldsymbol{h}_{i,a}^t$ to obtain an adversarial embedding $\Tilde{\boldsymbol{h}}^t_{i,a} = \boldsymbol{h}_{i,a}^t+\boldsymbol{\eta}^t_i$, 
before submitting it to the server. 
 Upon receiving $\Tilde{\boldsymbol{h}}^t_{i,a}$ and $\boldsymbol{h}_{i,b}^t$, the server returns the prediction $\boldsymbol{f}_0(\Tilde{\boldsymbol{h}}^t_{i,a};\boldsymbol{h}_{i,b}^t)$ to all clients. After collecting all predictions of $B^t$ adversarial embeddings, the adversary computes the ASR, i.e., $A( \{\boldsymbol{\eta}_i^t\}_{i=1}^{B^t}, \mathcal{C}^t; B^t) =\frac{\sum_{i=1}^{B^t}\mathbbm{1}\left(\hat{y}\left(\boldsymbol{f}_0(\Tilde{\boldsymbol{h}}^t_{i,a};\boldsymbol{h}_{i,b}^t)\right) = y_v\right)}{B^t}$ for the targeted attack with target label $y_v$, or $A(\{\boldsymbol{\eta}_i^t\}_{i=1}^{B^t}, \mathcal{C}^t;B^t) =  \frac{\sum_{i=1}^{B^t}\mathbbm{1}(\hat{y}\left(\boldsymbol{f}_0\left(\Tilde{\boldsymbol{h}}^t_{i,a};\boldsymbol{h}_{i,b}^t)\right) \neq y_u\right)}{B^t}$ for the untargeted attack with label $y_u$. 

 The adversary aims to find the optimal set of corruption patterns $\{{\cal C}^t\}_{t=1}^T$, and the optimal set of perturbations $\{\{\boldsymbol{\eta}^t_{i}\}_{i=1}^{B^t}\}_{t=1}^T$ for each sample $i\in [B^t]$ in attack round $t\in [T]$, thus maximizing the expected cumulative ASR over $T$ attack rounds. We formulate this attack as an online optimization problem in (\ref{opt1}). Note that the expectation $\mathbb{E}_t$ is taken over the randomness with the $t$-th attack round and the expectation $\mathbb{E}$ is taking  over the randomness of all $T$ rounds. 


\begin{equation}\label{opt1}
    \begin{aligned}
\max_{\{\mathcal{C}^t\}_{t=1}^T}\quad &\frac{\mathbb{E} \left[\sum_{t=1}^T\mathbb{E}_t\left[ \max_{\{\boldsymbol{\eta}_i^t\}_{i=1}^{B^t}} A(\{\boldsymbol{\eta}_i^t\}_{i=1}^{B^t}, \mathcal{C}^t; B^t) \right]\right] }{\sum_{t=1}^T B^t}
        \\
        &\textup{s.t. } |\mathcal{C}^t| = C,\; \|\boldsymbol{\eta}^t_i\|_{\infty} \leq \beta(ub_i - lb_i), \; \forall t \in [T].
    \end{aligned}
\end{equation}


\section{Methodology }

To solve Problem (\ref{opt1}), we decompose the above problem into an inner problem of \emph{adversarial example generation} (AEG) and an outer problem of \emph{corruption pattern selection} (CPS). We first specify the inner problem of AEG. At each round $t$, $t\in[T]$, with a fixed corruption pattern ${{\cal C}}^t$, for each test sample $i\in [B^t]$, the adversary intends to find the optimal perturbation $\boldsymbol{\eta}_i^t$ that minimizes some loss function, as shown in (\ref{opt2}).
We consider the loss function $L(\boldsymbol{\eta}^t_i; \mathcal{C}^t) = l(\boldsymbol{f}_0(\Tilde{\boldsymbol{h}}^t_{i,a};\boldsymbol{h}_{i,b}^t), y_v)$ for the targeted attack with target label $y_v$, and $L(\boldsymbol{\eta}^t_{i}; \mathcal{C}^t) = -l(\boldsymbol{f}_0(\Tilde{\boldsymbol{h}}^t_{i,a};\boldsymbol{h}_{i,b}^t), y_u)$ for the untargeted attack with label $y_u$, where $l(\cdot)$ denotes the loss metric, such as cross-entropy or margin loss.

\begin{equation}\label{opt2}
    \begin{aligned}
        \text{Inner problem (AEG):} & \quad\min_{\boldsymbol{\eta}_i^t}L(\boldsymbol{\eta}_i^t;{{\cal C}}^t), \quad \textup{s.t. } \|\boldsymbol{\eta}^{t}_i\|_{\infty} \leq \beta(ub_i-lb_i), \forall i\in[B^t].
    \end{aligned}
\end{equation}
Then, we obtain the ASR of $B^t$ test samples, i.e., $A^*(\mathcal{C}^t;B^t) = A(\{\boldsymbol{\eta}_i^{t*}\}_{i=1}^{B^t}, \mathcal{C}^t; B^t)$, obtained using optimal perturbations $\boldsymbol{\eta}_i^{t*} ,i\in[B^t]$, from solving the problem AEG. As such, the outer problem of CPS can be cast into 
\begin{equation}\label{opt3}
    \begin{aligned}
           \text{Outer problem (CPS):}&\quad\min_{\{\mathcal{C}^t\}_{t=1}^T} \frac{\mathbb{E}\left[\sum_{t=1}^T (\alpha^*-\mathbb{E}_t\left[A^*(\mathcal{C}^t;{B^t})\right] \right]}{\sum_{t=1}^T B^t}
        \\
        &\textup{s.t. } |\mathcal{C}^t| = C,\; \forall t \in [T],
    \end{aligned}
\end{equation}
where $\alpha^*$ is any positive constant. The inherent randomness of $A^*(\mathcal{C}^t;{B^t})$ for a fixed $\mathcal{C}^t$ arises from the random test samples and the random noises in the AE generation process.


\subsection{AE generation by solving the  AEG problem}

To address the box-constraint inner AEG problem (\ref{opt2}), one might initially consider employing the projected gradient descent (PGD) method~\cite{madry2017towards}. However, in our setting, the adversary can only access the value of the loss function and cannot directly obtain the gradient, thus necessitating the use of ZOO methods. The ZOO method iteratively seeks for the optimal variable. Each iteration typically commences with an estimation of the current variable's gradient, followed by a gradient descent-based variable update. NES~\cite{ilyas2018black}, a type of ZOO method, not only estimates the gradient but also requires fewer queries than conventional finite-difference methods. NES is thus especially well-suited for addressing the AEG problem (\ref{opt2}) in the VFL setting, where query times are inherently limited. In the process of AE generation using NES, the adversary samples $n$ Gaussian noises $\boldsymbol{\delta}_j\sim \mathcal{N}(\boldsymbol{0},\boldsymbol{I})$, $j\in[n]$, and adds them to the current variable $\boldsymbol{\eta}_i^t$, with some scaling parameter $\sigma>0$. Then, the gradient estimation is given by 
\begin{equation}\label{gradient_est}
\nabla_{\boldsymbol{\eta}_i^t} L(\boldsymbol{\eta}_i^t; \mathcal{C}^t) \approx \frac{1}{\sigma n} \sum_{j=1}^{n} \boldsymbol{\delta}_{j} L\left(\boldsymbol{\eta}_i^t+\sigma \boldsymbol{\delta}_{j}; \mathcal{C}^t\right).
\end{equation}
After obtaining the gradient estimates, the adversary can update $\boldsymbol{\eta}_i^t$ in a   PGD manner.   The details of the AE generation process are provided in Algorithm~\ref{AE_alg} in Appendix~\ref{sec:AE}. Note that the number of queries on each test sample is limited to $Q$, therefore, the adversary can update the drafted perturbation at most $\lfloor\frac{Q}{n}\rfloor$ times for each sample.

\subsection{Thompson sampling with the empirical maximum reward for solving the CPS problem}
To solve the CPS problem, we make a key observation that \emph{the outer problem in (\ref{opt3}) can be cast as an MAB problem}. Specifically, picking $C$ out of total $M$ clients to corrupt results in $N={M \choose C}$ possible corruption patterns, which are defined as $N$ arms in the MAB problem. That is to say, there is a bijection between the set of $N$ arms and the optimization space of $\mathcal{C}^t$. Therefore, we can transform optimization variables $\{\mathcal{C}^t\}_{t=1}^T$ in (\ref{opt3}) into the selected arms at $t$ round, i.e., $\{k(t)\}_{t=1}^T$. At round $t$, pulling an arm $k(t)$ returns the reward $r_{k(t)}(t)$ as the ASR, i.e., $r_{k(t)}(t)=A^*({\cal C}^t;{B^t})\in [0,1]$. We define the mean of the reward for arm $k(t)$ as $\mathbb{E}_t[r_{k(t)}(t)]=\mu_{k(t)}=\mathbb{E}_t[A^*({\cal C}^t;{B^t})]$. Without loss of generality, we assign the best arm the arm 1 with fixed positive mean $\mu_1>0$, which can be considered as the positive value $\alpha^*$ in (\ref{opt3}). Finally, 
the CPS problem in (\ref{opt3}) is transformed into an MAB problem, i.e., $\min_{\{(k(t)\}_{t=1}^T}\mathbb{E} \left[\sum_{t=1}^T (\mu_1 - \mu_{k(t)})\right].$

\begin{algorithm}[t!]
\caption{E-TS for CPS}\label{alg2}
\begin{algorithmic}[1]
\STATE {\bfseries Initialization:} $\forall k\in [N], \hat{\mu}_k =0, \hat{\sigma}_k = 1, n_k = 0, r_k^{\max} = 0, \hat{\varphi}_k = 0$.
    \FOR{$t = 1, 2,\ldots, T$}
    
    \IF{$t> $\textcolor{red}{$t_0$}} 
        \STATE Select fully explored arms to construct the set $\mathcal{S}_t = \{k\in [N]: n_k \geq \frac{(t-1)}{N}\}$.
        \STATE Select the empirical best arm $k^{emp}(t) = \max_{{k}\in \mathcal{S}_t} \hat{\mu}_k$.
        \STATE Initialize $\mathcal{E}^t = \emptyset $, add arms $k\in [N]$ which satisfy $\hat{\varphi}_k \geq \hat{\mu}_{k^{emp}(t)}$ to $\mathcal{E}^t$.
    \ELSE
        \STATE Initialize set $\mathcal{E}^t = [N]$.
    \ENDIF
    
    \STATE $\forall k\in \mathcal{E}^t$: Sample $\theta_k \sim \mathcal{N}(\hat{\mu}_k,\hat{\sigma}_k)$.
    \STATE Choose the arm $k(t) = \argmax_{k}\theta_{k}$ and decide the corrpution pattern $\mathcal{C}^t = k(t)$.
    \STATE Sample batch data $[B^t]$, play the arm $k(t)$ as the corruption pattern in Algorithm 2 and observe the reward $r_{k(t)}(t)$ from the attack result for the corrupted embedding $\boldsymbol{h}^t_{i,a} = [\boldsymbol{h}^t_{i,a_1},\ldots,\boldsymbol{h}^t_{i,a_C}], \forall i\in [B^t]$.
    
    \STATE Update $n_{k(t)} = n_{k(t)} + 1$, $\hat{\mu}_{k(t)} = \frac{\hat{\mu}_{k(t)}(n_{k(t)}-1) + r_{k(t)}(t)}{n_{k(t)}}$, $\hat{\sigma}_{k(t)} = \frac{1}{n_{k(t)}+1}$, $r_{k(t)}^{\max} = \max\{ r_{k(t)}^{\max}, r_{k(t)}(t)\}$,$\hat{\varphi}_{k(t)} =\frac{\hat{\varphi}_{k(t)}(n_{k(t)}-1) + r_{k(t)}^{\max}}{n_{k(t)}}$.
\ENDFOR
 \STATE Output $\{k(1),\ldots,k(T)\}$
\end{algorithmic}
\end{algorithm}


{\bf E-TS algorithm.}
In our context, the adversary could face a significant challenge as the exploration space $N$ can become prohibitively large when engaging with hundreds of clients, which could result in a steep accumulation of regret. To mitigate the issue from extensive exploration, we first introduce the following definition of the competitive arm.

\begin{definition}[Competitive arm]\label{def1}
 An arm $k$ is described as a competitive arm when the expected maximum reward is larger than the best arm's mean, i.e., $\tilde{\Delta}_{k,1} = \frac{\sum_{t=1}^T\mathbb{E}[r^{\max}_k(t)]}{T}- \mu_1 \geq 0$, where $r^{\max}_k(t) = \max_{\tau \in [t]} \{r_k(\tau)\}$. Otherwise, it is a non-competitive arm. 
\end{definition}
Based on the above definition, we propose \emph{Thompson sampling with Empirical maximum reward} (E-TS) algorithm. The basic idea of E-TS is to restrict the exploration space within the set of competitive arms to reduce accumulated regret. However, the ground-truth competitive arms cannot be accessed a priori. Therefore, we propose to  construct an {\em empirical competitive set} $\mathcal{E}^t$ with estimated competitive arms at each round $t$ and restrict exploration within it. Estimating the competitive arms requires calculating the {\em empirical best arm} and {\em empirical maximum reward} defined as follows.
\begin{definition}
[Empirical best arm and empirical maximum reward]\label{def2}
An arm $k$ is selected as the empirical best arm $k^{emp}(t)$ at round $t$, when $k = \arg\max_{k\in \mathcal{S}^t}\hat{\mu}_k(t)$, where $\hat{\mu}_k(t)$ is the estimated mean of arm $k$’s reward at round $t$, $\mathcal{S}^t = \{k\in[N]:n_k(t)\geq \frac{(t-1)}{N}\}$, and $n_k(t)$ denotes the number of times pulling arm $k$ in $t$ rounds. An arm $k$'s empirical maximum reward $\hat{\varphi}_k(t)$ is computed by: $
    \hat{\varphi}_k(t) := \frac{\sum_{\tau = 1}^t r_k^{\max}(\tau) \mathbbm{1}(k(\tau)= k)}{n_k(t)}.
$
\end{definition} 
Based on Definitions \ref{def1} and \ref{def2}, we are now able to present the key components of the E-TS algorithm. E-TS consists of two steps: first, for constructing an empirical competitive set $\mathcal{E}^t$ at round $t$, E-TS estimates $\mu_1$ and $\frac{\sum_{t=1}^T\mathbb{E}[r^{\max}_k(t)]}{T}$ using the mean of empirical best arm $\hat{\mu}_{k^{emp}(t)}\left(t\right)$ and the empirical maximum reward $\hat{\varphi}_k(t)$, and obtains $\mathcal{E}^t = \{k\in[N]: \hat{\varphi}_k(t) -\hat{\mu}_{k^{emp}(t)}(t)\geq 0\}$. Second, while performing TS to explore each arm, E-TS  adopts a Gaussian prior $\mathcal{N}(\hat{\mu}_k(t),\frac{1}{n_{k(t)}+1})$ to approximate the distribution of the reward, where $\hat{\mu}_k(t)$ is defined as $\hat{\mu}_k(t) := \frac{\sum_{\tau = 1}^t r_k(\tau)\mathbbm{1}(k(\tau) = k)}{n_k(t)}$. In addition to  the above two steps, E-TS also involves $t_0$ warm-up rounds, in which it simply executes TS across all arms. These warm-up rounds are designed to facilitate a more accurate estimation of each arm's reward mean and expected maximum reward.
The complete algorithm is presented in Algorithm \ref{alg2}.

\begin{remark}
Previous work \cite{gupta2021multi} leverages the upper bound $s_{k,l}(r)$ of arm $k$'s reward conditioned on obtaining reward $r$ from pulling arm $l$ (i.e., $\mathbb{E}[r_k(t)| r_l(t) = r] \leq s_{k,l}(r)$) to reduce the exploration space, where $s_{k,l}(r)$ is a known constant. In contrast, the proposed E-TS algorithm does not require any prior information about reward upper bound, making it more practical.
\end{remark}

\section{Regret Analysis}
In this section, we analyze the regret bound for the proposed E-TS algorithm. Prior to proof, we assume that each arm is pulled at least twice during the initial warm-up rounds. This assumption aligns with our analysis on the optimal choice of warm-up rounds detailed in Appendix~\ref{warm}. Achieving this assumption is highly probable as the number of warm-up rounds increases asymptotically~\cite{agrawal2017near}. Additionally, an adversary can traverse all arms before implementing E-TS to ensure this prerequisite is met. To facilitate discussion, we first introduce two key lemmas. 
Then, we present the expected regret bound of E-TS algorithm in Theorem~\ref{thm:regret}. We defer all proof details of the lemmas and the theorem in Appendix~\ref{sec:proof}.

\begin{lemma}[Expected pulling times of a non-competitive arm]
    Under the above assumption, for a non-competitive arm $k^{nc}\neq 1$ with $ \Tilde{\Delta}_{k^{nc},1}<0$, the expected number of pulling times in $T$ rounds, i,e., $\mathbb{E}[n_{k^{nc}}(T)]$, is bounded by
    $
    \mathbb{E}[n_{k^{nc}}(T)]  \leq \mathcal{O}(1).$
   \label{lmnc}
\end{lemma}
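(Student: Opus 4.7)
The plan is to bound $\mathbb{E}[n_{k^{nc}}(T)]$ by splitting the $T$ rounds into the warm-up phase (contributing at most $t_0 = \mathcal{O}(1)$ pulls) and the post-warm-up phase, and then showing that after warm-up the non-competitive arm $k^{nc}$ is excluded from the empirical competitive set $\mathcal{E}^t$ with overwhelming probability. Since the algorithm only pulls an arm inside $\mathcal{E}^t$, I would write
\begin{equation*}
\mathbb{E}[n_{k^{nc}}(T)] \;\leq\; t_0 \;+\; \sum_{t=t_0+1}^{T} \Pr\!\bigl(k(t)=k^{nc}\bigr) \;\leq\; t_0 \;+\; \sum_{t=t_0+1}^{T} \Pr\!\bigl(k^{nc}\in\mathcal{E}^t\bigr),
\end{equation*}
so the entire task reduces to controlling $\Pr(k^{nc}\in\mathcal{E}^t)$, i.e., the probability that $\hat{\varphi}_{k^{nc}}(t) \geq \hat{\mu}_{k^{emp}(t)}(t)$.

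Next, using the negative gap $\tilde{\Delta}_{k^{nc},1}<0$, I would choose a margin $\epsilon = -\tilde{\Delta}_{k^{nc},1}/3 > 0$ and define the pair of concentration events
\begin{equation*}
\mathcal{A}_t := \Bigl\{\hat{\varphi}_{k^{nc}}(t) \leq \tfrac{1}{T}\!\!\sum_{\tau=1}^{T}\mathbb{E}[r^{\max}_{k^{nc}}(\tau)] + \epsilon\Bigr\}, \qquad \mathcal{B}_t := \bigl\{\hat{\mu}_1(t) \geq \mu_1 - \epsilon\bigr\}.
\end{equation*}
On $\mathcal{A}_t \cap \mathcal{B}_t$, together with the fact that the empirical best arm inside $\mathcal{S}^t$ satisfies $\hat{\mu}_{k^{emp}(t)}(t)\geq \hat{\mu}_1(t)$ (since arm $1$ is itself sufficiently explored by the warm-up assumption that each arm is pulled at least twice and by the $\frac{t-1}{N}$ condition eventually being met), my choice of $\epsilon$ forces
\begin{equation*}
\hat{\varphi}_{k^{nc}}(t) \;<\; \mu_1 - \epsilon \;\leq\; \hat{\mu}_{k^{emp}(t)}(t),
\end{equation*}
so $k^{nc}\notin\mathcal{E}^t$. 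Hence $\Pr(k^{nc}\in\mathcal{E}^t)\leq \Pr(\bar{\mathcal{A}}_t)+\Pr(\bar{\mathcal{B}}_t)$, and the problem reduces to showing both terms are summable in $t$.

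For $\Pr(\bar{\mathcal{B}}_t)$ I would apply a standard Hoeffding bound to the bounded $[0,1]$-valued rewards of arm $1$: since arm $1$ is pulled at least $\Omega(t/N)$ times in $\mathcal{S}^t$ (or a constant number of times by the warm-up), $\Pr(\bar{\mathcal{B}}_t)\leq 2\exp(-2\epsilon^2 n_1(t))$, which sums to $\mathcal{O}(1)$ over $t$. The term $\Pr(\bar{\mathcal{A}}_t)$ is subtler because $\hat{\varphi}_{k^{nc}}(t)$ is the empirical average of $r^{\max}_{k^{nc}}(\tau)$, a \emph{non-stationary} and monotone statistic whose expectation at time $\tau$ drifts upward. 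My plan is to upper bound $\mathbb{E}[\hat{\varphi}_{k^{nc}}(t)]$ by $\tfrac{1}{T}\sum_\tau \mathbb{E}[r^{\max}_{k^{nc}}(\tau)]$ using monotonicity of $r^{\max}_{k^{nc}}$ in $\tau$ and that the observations from arm $k^{nc}$ occur at a subset of times $\{\tau: k(\tau)=k^{nc}\}\subseteq [t]\subseteq [T]$, then apply a Hoeffding-style bound on the bounded $[0,1]$ summands to get an exponential tail in $n_{k^{nc}}(t)$.

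The main obstacle is precisely this last step: unlike a classical Hoeffding setup, the summands $r^{\max}_{k^{nc}}(\tau)$ are neither independent nor identically distributed across pulls of arm $k^{nc}$ (each is a running maximum), and the times at which arm $k^{nc}$ is pulled are themselves chosen adaptively by E-TS. Handling this will likely require carefully arguing via a martingale / Azuma argument on the \emph{increments} of the running maximum, or by introducing a coupling that upper-bounds $\hat{\varphi}_{k^{nc}}(t)$ by an empirical average of the i.i.d. sequence $r^{\max}_{k^{nc}}(T)$ itself (replacing each pull-time value by the final-time value, which is monotonically larger). The rest is routine: collecting the warm-up constant, the geometric sum of exponential tails, and the bound $n_{k^{nc}}(t) \geq 2$ from the warm-up assumption then yields $\mathbb{E}[n_{k^{nc}}(T)] \leq \mathcal{O}(1)$ as claimed.
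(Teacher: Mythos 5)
Your high-level strategy is the same as the paper's: absorb the warm-up rounds into a constant, then show that after warm-up a non-competitive arm is excluded from the empirical competitive set $\mathcal{E}^t$ with summable failure probability, by concentrating $\hat{\varphi}_{k^{nc}}(t)$ below $\mu_1$ and the empirical best arm's mean above $\mu_1$. The genuine gap is in the step where you assert $\hat{\mu}_{k^{emp}(t)}(t)\geq \hat{\mu}_1(t)$ "since arm $1$ is sufficiently explored \ldots by the $\frac{t-1}{N}$ condition eventually being met." That condition is a random event, not a consequence of the warm-up assumption (which only gives $n_1(t)\geq 2$, useless once $t>2N+1$): if arm $1$ falls out of $\mathcal{S}^t$, the empirical best arm is taken over a set not containing arm $1$, and you have no lower bound on $\hat{\mu}_{k^{emp}(t)}(t)$, so the chain $\hat{\varphi}_{k^{nc}}(t)<\mu_1-\epsilon\leq\hat{\mu}_{k^{emp}(t)}(t)$ collapses. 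The same issue silently invalidates your claim that $\Pr(\bar{\mathcal{B}}_t)\leq 2\exp(-2\epsilon^2 n_1(t))$ "sums to $\mathcal{O}(1)$": that sum is only $\mathcal{O}(1)$ if $n_1(t)=\Omega(t/N)$. The paper devotes a separate lemma (its Lemma~\ref{lm3}) precisely to bounding $\Pr\bigl(n_1(t)<\tfrac{t-1}{N}\bigr)\leq\exp(-tD(\gamma\|\delta))$ via a concentration bound for dependent indicator variables (Fact~\ref{fact3}), and this term is one of the three summable pieces in its final $\mathcal{O}(1)$ bound; your proof needs an analogous argument and currently has none. (The paper also splits off the event $k^{nc}=k^{emp}(t)$ and bounds it by $4N/\Delta_{k^{nc}}^2$ via its Lemma~\ref{lm4}; your version handles this implicitly once the $\mathcal{S}^t$ issue is fixed, since $\hat{\varphi}_k\geq\hat{\mu}_k$ always.)

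On the step you correctly flag as delicate --- concentration of $\hat{\varphi}_{k^{nc}}(t)$, an average of running maxima --- your first proposed fix (a martingale/bounded-differences argument on the increments) is the right direction, but your second one does not work: upper-bounding each pull-time value by the final running maximum replaces the average by $\max_{i\leq j}r_{k^{nc}}(\tau_i)$, and $\Pr\bigl(\max_{i\leq j}r_{k^{nc}}(\tau_i)\geq \mu_1+\tilde{\Delta}_{k^{nc},1}/2\bigr)$ is \emph{increasing} in $j$ (it tends to $1$ unless the reward's essential supremum lies below that threshold), so the resulting tail is not summable. For what it is worth, the paper itself simply invokes Hoeffding's inequality on these dependent, non-identically-distributed summands at its step $(l)$, which is not rigorous either; but since your task is to supply a correct proof, you would need to either add a distributional assumption or give a genuine martingale argument here, in addition to closing the $n_1(t)<\tfrac{t-1}{N}$ gap above.
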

\vspace{-2mm}
\begin{lemma}[Expected pulling times of a competitive but sub-optimal arm] 
    Under the above assumption, the expected number of times pulling a competitive but sub-optimal arm $k^{sub}$ with $ \Tilde{\Delta}_{k^{sub},1}\geq 0$ in $T$ rounds is bounded as follows, 
    $$ \mathbb{E}[n_{k^{sub}}(T)] =\sum_{t= 1}^T \operatorname{Pr}(k(t) = k^{sub}, n_1(t)\geq \frac{t}{N})\leq\mathcal{O}(\log(T)).$$\label{lmsub}
\end{lemma}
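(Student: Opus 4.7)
The plan is to adapt the standard Thompson-sampling regret analysis of \cite{agrawal2017near} to the E-TS algorithm, where exploration is restricted to the empirical competitive set $\mathcal{E}^t$. The key is to show that once arm~$1$ has been pulled sufficiently often, a posterior sample $\theta_{k^{sub}}(t)$ can exceed $\theta_1(t)$ only rarely, yielding the familiar $\log T$ behaviour; the equality in the statement should be read as the $n_1(t) < t/N$ part being absorbed into an $\mathcal{O}(1)$ term.

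First I would write $\mathbb{E}[n_{k^{sub}}(T)] = \sum_{t=1}^T \Pr(k(t)=k^{sub})$, split according to whether $n_1(t) \geq t/N$, and show that $\sum_{t=1}^T \Pr(k(t)=k^{sub},\, n_1(t) < t/N) = \mathcal{O}(1)$. By Lemma~\ref{lmnc}, non-competitive arms collectively receive only $\mathcal{O}(1)$ pulls, so virtually all pulls after the warm-up $t_0$ go to the at most $D$ competitive arms in $\mathcal{E}^t$. Since plain TS within $\mathcal{E}^t$ allocates pulls proportionally among well-concentrated arms, arm~$1$ receives an $\Omega(1/N)$ fraction of the first $t$ rounds with exponentially high probability; a Chernoff bound on $\mathbbm{1}(k(\tau)=1)$ then turns this into the desired constant bound.

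For the main term I would apply the Agrawal--Goyal decomposition. Fix $\epsilon < \Delta_{k^{sub}}/2$ and define the concentration events $E_1^{\mu}(t)=\{|\hat{\mu}_1(t) - \mu_1| \leq \epsilon\}$ and $E_1^{\theta}(t)=\{\theta_1(t) \geq \mu_1 - 2\epsilon\}$. Under $n_1(t) \geq t/N$, the Gaussian posterior has variance $\hat{\sigma}_1(t) = 1/(n_1(t)+1) \leq N/(t+N-1)$, so $\Pr(\overline{E_1^{\mu}(t)})$ and $\Pr(\overline{E_1^{\theta}(t)} \mid E_1^{\mu}(t))$ sum over $t$ to $\mathcal{O}(1)$ by Hoeffding and Gaussian tail estimates. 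On the good event $E_1^{\mu}(t) \cap E_1^{\theta}(t)$, the event $k(t)=k^{sub}$ forces $\theta_{k^{sub}}(t) > \mu_1 - 2\epsilon \geq \mu_{k^{sub}} + \epsilon$, whose probability is at most $\exp\bigl(-c\,\epsilon^2\, n_{k^{sub}}(t)\bigr)$ for some absolute $c$. Breaking the sum across rounds by the current value of $n_{k^{sub}}(t)$ produces the standard $\mathcal{O}(\log T / \Delta_{k^{sub}}^2)$ bound.

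The hardest step is controlling the interaction between the competitive-set $\mathcal{E}^t$ and the posterior dynamics. I must verify that (i) arm~$1$ lies in $\mathcal{E}^t$ with high probability after warm-up, so that $\theta_1(t)$ is actually drawn at round $t$, and (ii) $k^{sub}$, being competitive by Definition~\ref{def1}, satisfies $\hat{\varphi}_{k^{sub}}(t) \geq \hat{\mu}_{k^{emp}(t)}(t)$ often enough that the above posterior comparison, rather than set-membership, is the binding condition. Step (ii) is delicate because $\hat{\varphi}_{k^{sub}}(t)$ is the empirical mean of the running maxima $r_{k^{sub}}^{\max}(\tau)$, which are neither i.i.d.\ nor bounded differences in $\tau$; I would address this by an Azuma--Hoeffding martingale argument on the increments of $\hat{\varphi}_{k^{sub}}$ relative to $\mathbb{E}[r_{k^{sub}}^{\max}(n_{k^{sub}}(t))]$, together with the monotonicity of the running maximum to control the conditional bias. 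Once these coupling steps are in place, the two preceding paragraphs assemble into the claimed $\mathcal{O}(\log T)$ bound.
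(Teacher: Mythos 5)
Your proposal follows essentially the same route as the paper's proof: both reduce the count on the good event (arm 1 pulled at least $t/N$ times and present in $\mathcal{E}^t$) to the plain Thompson-sampling bound of Agrawal--Goyal, and show that the complementary events --- $n_1(t)$ too small, and arm 1 excluded from $\mathcal{E}^t$ (the paper's event $E_2(t)$, handled in Lemma~\ref{lm5}) --- contribute only $\mathcal{O}(1)$ in total. Two small corrections: your step (ii) is unnecessary for an upper bound, because any round with $k^{sub}\notin\mathcal{E}^t$ cannot produce a pull of $k^{sub}$ at all, so the delicate martingale control of $\hat{\varphi}_{k^{sub}}(t)$ can simply be dropped; and ``TS allocates pulls proportionally'' is not the right justification for $n_1(t)\geq t/N$ holding with high probability --- what is actually needed (and what the paper's Lemma~\ref{lm3} supplies) is a per-round bound on $\operatorname{Pr}\left(k(\tau)\neq 1\mid \mathcal{F}_\tau\right)$ strictly below $(N-1)/N$, which is then fed into the Chernoff-type concentration bound for dependent indicators.
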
 
\vspace{-3mm}
\begin{theorem}[Upper bound on expected regret of E-TS] \label{thm:regret}
Let $D\leq N$ denote the number of competitive arms. Under the above assumption, the expected regret of the E-TS algorithm is upper bounded by 
\textcolor{red}{$D\mathcal{O}(\log(T))+(N-D)\mathcal{O}(1).$}\label{the1}
\end{theorem}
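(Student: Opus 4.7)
The plan is to reduce Theorem~\ref{thm:regret} to a direct summation of the two pulling-count bounds provided by Lemmas~\ref{lmnc} and~\ref{lmsub}. Starting from the definition of the expected regret in \eqref{regret}, I would write
$$\mathbb{E}[R(T)] = \sum_{k\neq 1}\Delta_k\,\mathbb{E}[n_k(T)],$$
and use the fact that rewards are supported in $[0,1]$, so each gap $\Delta_k = \mu_1-\mu_k \leq 1$. This reduces the task to controlling $\sum_{k\neq 1}\mathbb{E}[n_k(T)]$.

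Next, using Definition~\ref{def1}, I would partition the $N-1$ sub-optimal arms into two disjoint classes: the $N-D$ non-competitive arms and, since arm~1 is itself competitive, the $D-1$ competitive but sub-optimal arms. Applying Lemma~\ref{lmnc} to each non-competitive arm contributes $(N-D)\mathcal{O}(1)$ to the total regret, while applying Lemma~\ref{lmsub} to each competitive sub-optimal arm contributes $(D-1)\mathcal{O}(\log T) = D\mathcal{O}(\log T)$. Adding the two terms yields the stated bound $D\mathcal{O}(\log T)+(N-D)\mathcal{O}(1)$.

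The main subtlety lies not in the summation itself but in reconciling Lemma~\ref{lmsub} with the unrestricted pulling count: the lemma controls $\sum_t \Pr(k(t)=k^{sub},\,n_1(t)\geq t/N)$, so I still need $\sum_t \Pr(n_1(t) < t/N) \leq \mathcal{O}(1)$ to close the gap. To obtain this, I would argue that once the warm-up phase has pulled each arm at least twice, the Gaussian posterior used by E-TS concentrates: $\hat{\mu}_1(t)$ concentrates around $\mu_1$ and the empirical maximum reward $\hat{\varphi}_1(t)$ concentrates around $\mathbb{E}[r_1^{\max}]$, so arm~1 stays inside the empirical competitive set $\mathcal{E}^t$ with high probability, and the Thompson sample $\theta_1$ exceeds the other $\theta_k$ often enough to keep $n_1(t)$ above $t/N$ except on an event whose probabilities are summable. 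Absorbing this $\mathcal{O}(1)$ tail into the $(N-D)\mathcal{O}(1)$ term preserves the stated form. The hard part will be coordinating the two-stage selection rule of E-TS (restriction to $\mathcal{E}^t$, then TS sampling) through the shared estimators $\hat{\mu}_k(t)$ and $\hat{\varphi}_k(t)$, since these appear in both stages and their concentration must be handled jointly rather than arm-by-arm.
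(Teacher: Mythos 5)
Your proposal matches the paper's own proof: it decomposes the regret sum over sub-optimal arms into non-competitive arms (each contributing $\mathcal{O}(1)$ pulls via Lemma~\ref{lmnc}) and competitive but sub-optimal arms (each contributing $\mathcal{O}(\log T)$ pulls via Lemma~\ref{lmsub}), and the subtlety you flag about controlling $\sum_t \Pr(n_1(t) < t/N)$ is handled in the paper exactly as you sketch, by a summable concentration bound on the event that the optimal arm is pulled infrequently. No substantive difference.
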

\vspace{-5mm}
\begin{proof}[Proof sketch]
We first demonstrate that the probability that pulling the optimal arm is infrequent (i.e., $n_1(t)<\frac{(t-1)}{N}$) is bounded. 
Next, we categorize the sub-optimal arms into non-competitive arms and competitive but sub-optimal arms, and analyse their regret bound respectively.
For a non-competitive arm $k^{nc}$, the probability of $k(t) = k^{nc}$ is bounded by the probability of selecting as the competitive arm, i.e., $\operatorname{Pr}(k^{nc}\in\mathcal{E}^t)$, which is further bounded as in Lemma \ref{lmnc}.
On the other hand, for a competitive but sub-optimal arm $k^{sub}$, we further divide the analysis in two cases based on whether or not the optimal arm is included in $\mathcal{E}^t$. By combining the probability upper bounds in these two cases, we arrive at an upper bound on the probability of $k(t)=k^{sub}$ as in Lemma~\ref{lmsub}. 
\end{proof}
\vspace{-2mm}
\begin{remark}
In comparison with plain TS, our proposed E-TS holds a significant advantage in terms of limiting the expected number of times pulling a non-competitive arm, which is reduced from $\mathcal{O}(\log(T))$ to $\mathcal{O}(1)$.  
\end{remark}



\vspace{-3mm}
\section{Experimental Evaluations}
\vspace{-1.5mm}
\subsection{Setup}
\vspace{-1.5mm}
The proposed attack is implemented using the PyTorch framework~\cite{paszke2017automatic}, and all experiments are executed on a single machine equipped with four NVIDIA RTX 3090 GPUs. Each experiment is repeated for 10 trials, and the average values and their standard deviations are reported.

\textbf{Datasets.} We perform experiments on six datasets of distinct VFL tasks. 1) Tabular dataset: \textbf{Credit}~\cite{yeh2009comparisons} and \textbf{Real-Sim}~\cite{chang2011libsvm}, where data features are equally partitioned across 6 and 10 clients, respectively; 2) Computer vision (CV) dataset: \textbf{FashionMNIST}~\cite{xiao2017fashion} and \textbf{CIFAR-10}~\cite{krizhevsky2009learning}, with features equally distributed across 7 and 8 clients, respectively; 3) Multi-view dataset: \textbf{Caltech-7}~\cite{li_andreeto_ranzato_perona_2022}, which consists of 6 views, each held by a separate client; 4) Natural language dataset: \textbf{IMDB}~\cite{maas-EtAl:2011:ACL-HLT2011}, where each complete movie review is partitioned among 6 clients, each possessing a subset of sentences. More details about the datasets and model structures are provided in Appendix~\ref{sec:datset}.

\textbf{Baselines.} We consider three baseline strategies for corruption pattern selection: 1) \textbf{Fixed corruption pattern}, where the adversary corrupts a fixed set of clients during the inference. For comparison, we consider two fixed corruption patterns where one is the underlying optimal pattern with the highest ASR, and another is randomly selected at the beginning of the attack; 2) \textbf{Random corruption (RC)}, where the adversary selects uniformly at random a set of $C$ clients to corrupt in each attack round; and 3) \textbf{Plain Thompson sampling (TS)}, where the adversary executes the plain TS to improve the corruption pattern selection. 



\textbf{Experimental parameters setting.} The adversary can 
query the server for up to $Q=2000$ times per test sample. The number of warm-up rounds in E-TS $t_0$ is set to 80 for FashionMNIST, CIFAR-10, and Caltech-7, 50 for Credit and Real-Sim, and 40 for IMDB. 
For the targeted attack, we set the target label to 7 for FashonMNIST and CIFAR-10, and 3 for Caltech-7.
We measure the ASR over 30 test epochs, each comprising multiple attack rounds.
In our ablation study, we adjust one parameter at a time, keeping the rest constant, with default settings of $C = 2$, $t_0 = 80$, $Q = 2000$, and $\beta = 0.3$.
\vspace{-2mm}
\subsection{Results}
We plot the ASR of targeted and untargeted attacks for different datasets in Figure~\ref{attack}. Note that the targeted and untargeted attacks are equivalent for Credit, Real-Sim, and IMDB with binary labels. We observe that uniformly across all datasets, 
the proposed E-TS method effectively attacks VFL models with an ASR of $38 \% \sim 99 \%$ for targeted attack and $41 \% \sim 99 \%$ for untargeted attack.      
For each attack, we observe a significant gap in ASR between the best and sub-optimal corruption patterns, demonstrating the significance of corruption pattern selection. The RC baseline exhibits a stable, yet sub-optimal ASR performance, as it does not leverage any information from historical ASRs. In sharp contrast, the performance of both TS and E-TS converge to that of the best corruption pattern. Notably, thanks to the estimation of empirical maximum reward, the E-TS algorithm efficiently narrows down the exploration space, achieving a much faster and more stable convergence than TS.
\begin{figure*}[ht]
\minipage{0.51\textwidth}
  \centering
   {\quad \footnotesize{FashionMNIST}}
\endminipage\hfill
\minipage{0.43\textwidth}%
  \centering
 {\quad \footnotesize{CIFAR-10}}
\endminipage\hfill
\minipage{0.05\textwidth}
\endminipage\hfill

\vspace{-1mm}

\minipage{0.51\textwidth}
  \centering
   {\quad \tiny{(25 attack rounds/test epoch)}}
\endminipage\hfill
\minipage{0.43\textwidth}%
  \centering
 {\quad \tiny{(27 attack rounds/test epoch)}}
\endminipage\hfill
\minipage{0.05\textwidth}
\endminipage\hfill

\vspace{0.5mm}

\minipage{0.258\textwidth}
\centering\includegraphics[width=\columnwidth]{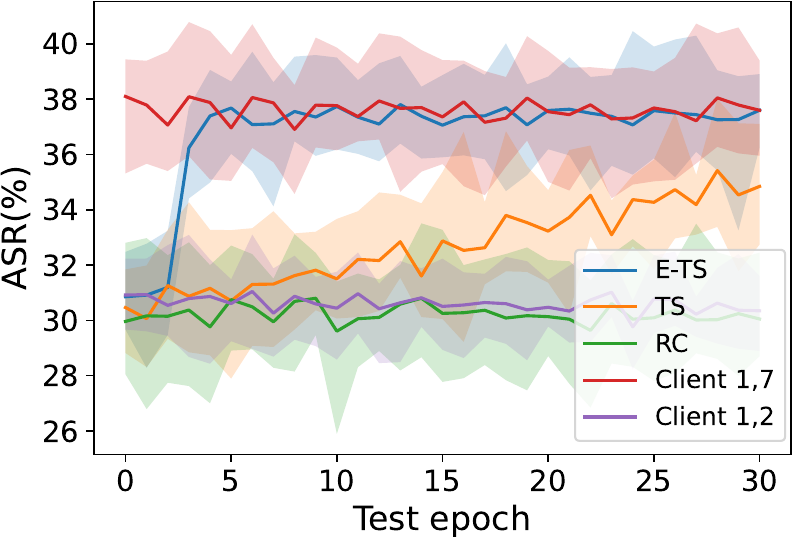}
  \subfigure{\scriptsize{\quad (a) Targeted ($\beta=0.15$)}}%
\endminipage\hfill
\minipage{0.246\textwidth}
  \centering
  \includegraphics[width=\columnwidth]{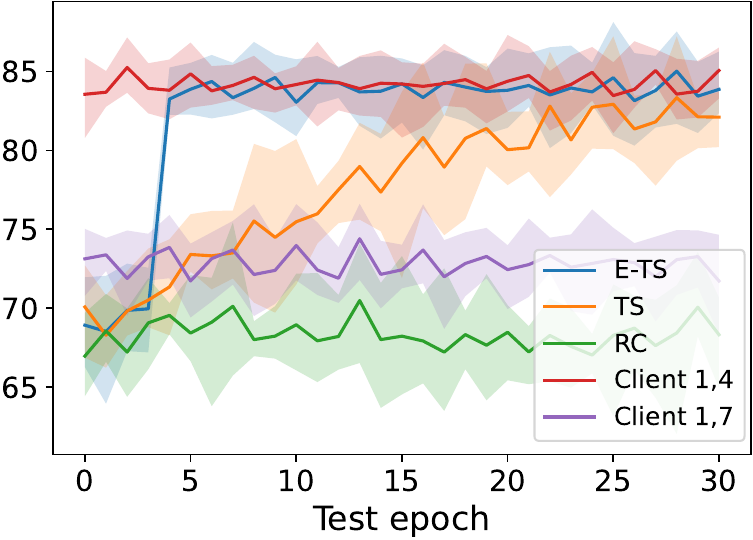}
  \subfigure{\scriptsize{\quad (b) Untargeted ($\beta=0.1$)}}
\endminipage\hfill
\minipage{0.246\textwidth}%
  \centering
  \includegraphics[width=\linewidth]{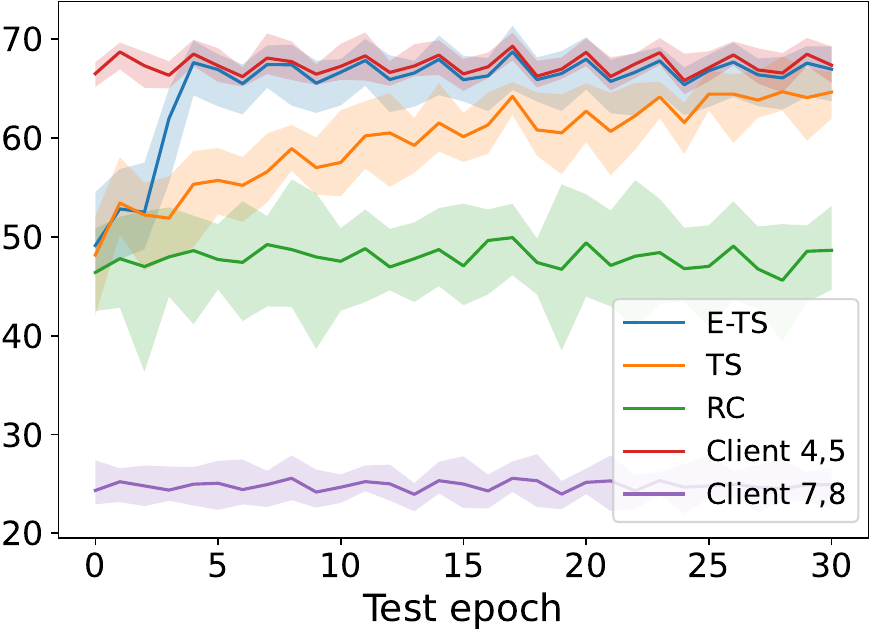}
  \subfigure{\scriptsize{\quad (c) Targeted ($\beta=0.08$)}} 
  \endminipage
\minipage{0.246\textwidth}%
  \centering
  \includegraphics[width=\linewidth]{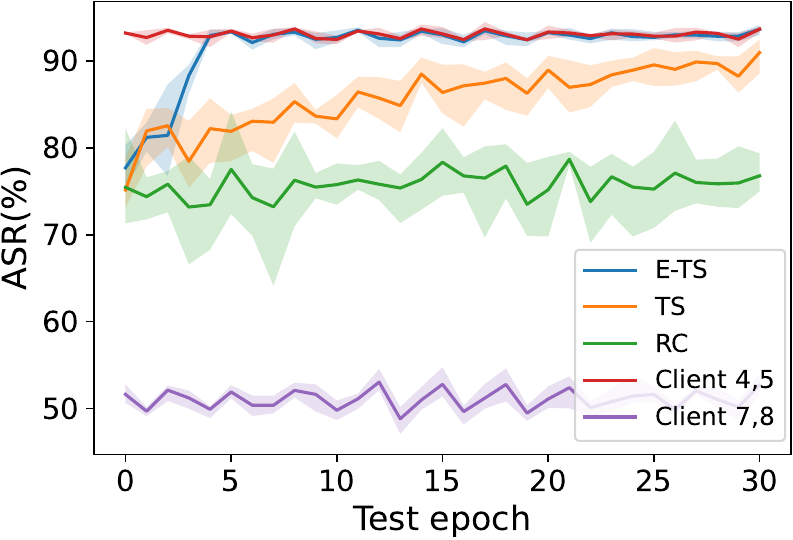}
  \subfigure{\scriptsize{\quad (d) Untargeted ($\beta=0.08$)}}
  \endminipage
\quad \\

\minipage{0.246\textwidth}
  \centering
   {\quad \footnotesize{Credit}}
\endminipage\hfill
\minipage{0.246\textwidth}
  \centering
   {\quad \footnotesize{Real-Sim}}
\endminipage\hfill
\minipage{0.5\textwidth}%
  \centering
 {\quad \footnotesize{Caltech}}
\endminipage\hfill
\minipage{0.05\textwidth}
\endminipage\hfill

\vspace{-1mm}

\minipage{0.51\textwidth}
  \centering
   {\quad \tiny{(30 attack rounds/test epoch)}}
\endminipage\hfill
\minipage{0.43\textwidth}%
  \centering
 {\quad \tiny{(27 attack rounds/test epoch)}}
\endminipage\hfill
\minipage{0.05\textwidth}
\endminipage\hfill

\vspace{0mm}


\minipage{0.264\textwidth}
\centering\includegraphics[width=\columnwidth]{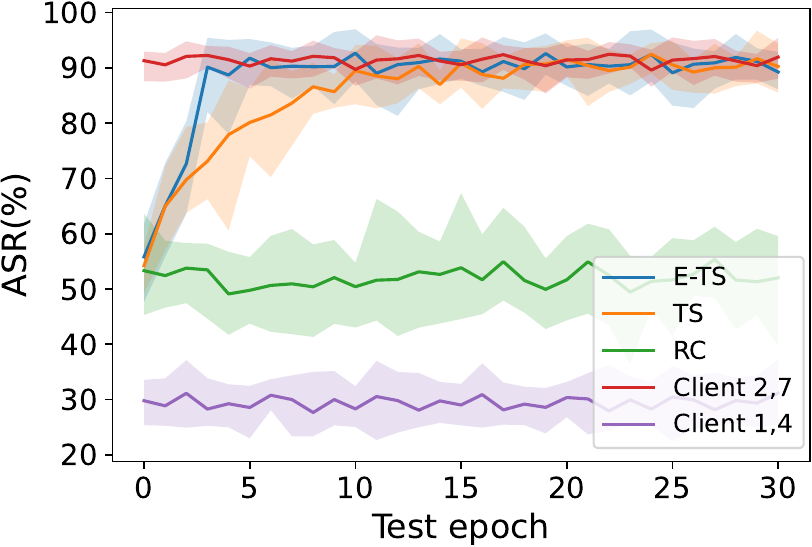}
  \subfigure{\scriptsize{\quad (e)   $\beta=0.3$}}
\endminipage\hfill
\minipage{0.243\textwidth}
  \centering
\includegraphics[width=\columnwidth]{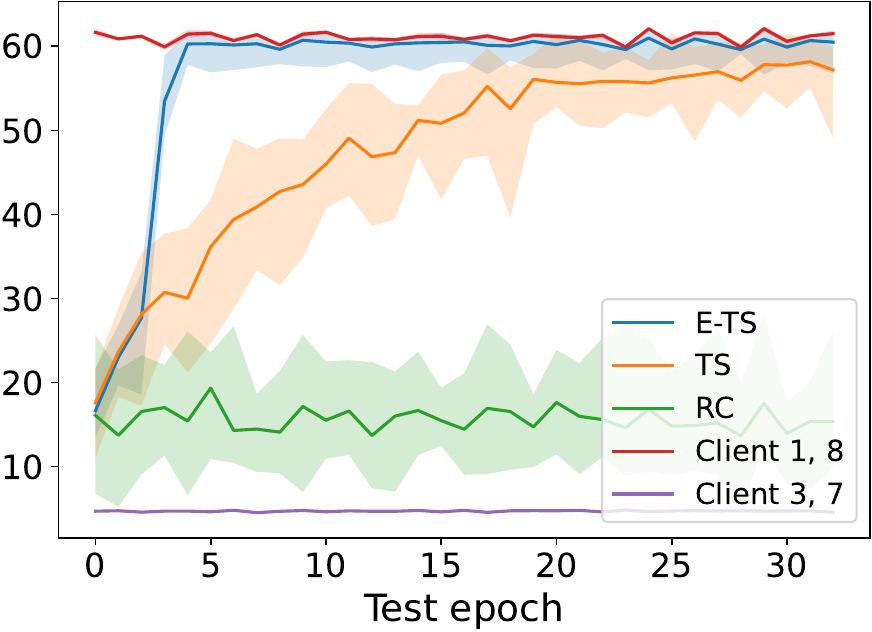}
  \subfigure{\scriptsize{\quad (f) $\beta=0.3$}}
\endminipage\hfill
\minipage{0.246\textwidth}%
  \centering
  \includegraphics[width=\linewidth]{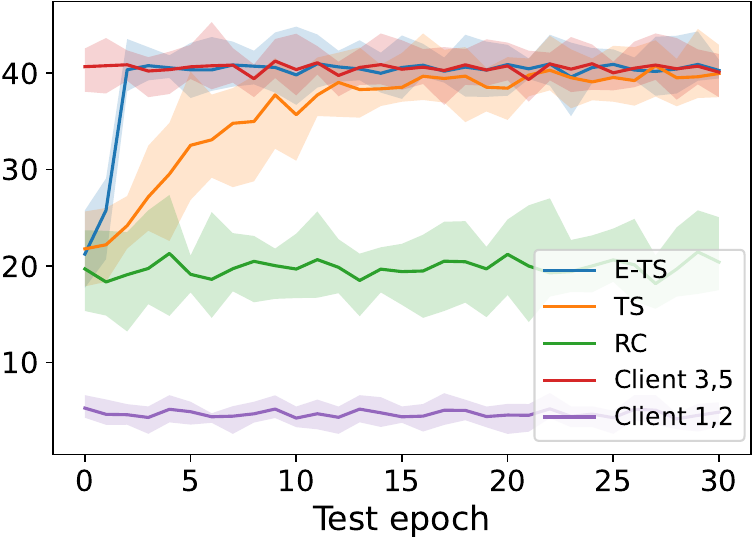}
  \subfigure{\scriptsize{\quad (g) Targeted ($\beta=0.3$)}}
  \endminipage
\minipage{0.246\textwidth}%
  \centering
  \includegraphics[width=\linewidth]{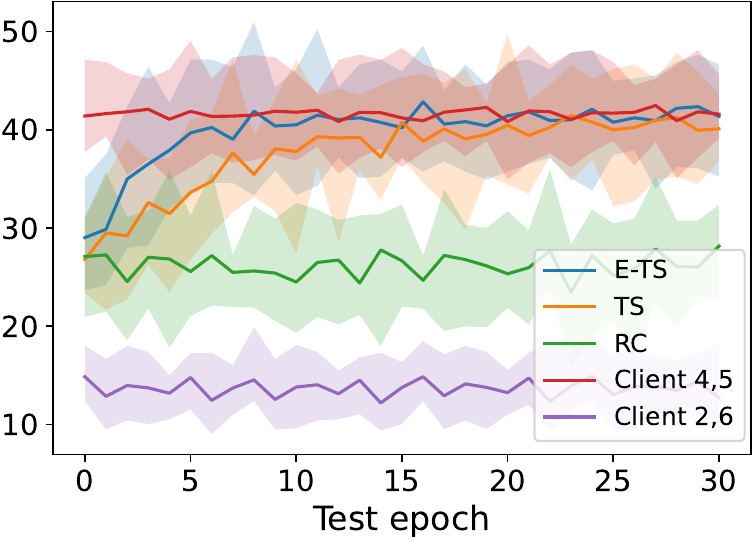}
 \subfigure{\scriptsize{\quad (h) Untargeted ($\beta=0.3$)}}
  \endminipage
\quad \\

\minipage{0.03\textwidth}%
\text{\quad}
\endminipage
\minipage{0.2525\textwidth}
  \centering
   {\quad IMDB}
\endminipage
\minipage{0.11\textwidth}%
\text{\quad}
\endminipage
\minipage{0.54\textwidth}
  \centering
   {\quad Defense}
\endminipage\hfill 

\vspace{-1mm}

\minipage{0.03\textwidth}%
\text{\quad}
\endminipage
\minipage{0.2525\textwidth}
  \centering
   {\quad \tiny{(20 attack rounds/test epoch)}}
\endminipage
\minipage{0.11\textwidth}%
\text{\quad}
\endminipage
\minipage{0.54\textwidth}
  \centering
 {\quad \tiny{(4 different strategies)}}
\endminipage\hfill

\vspace{0mm}

\minipage{0.03\textwidth}%
\text{\quad}
\endminipage
\minipage{0.26\textwidth}
  \centering
\includegraphics[width=\columnwidth]{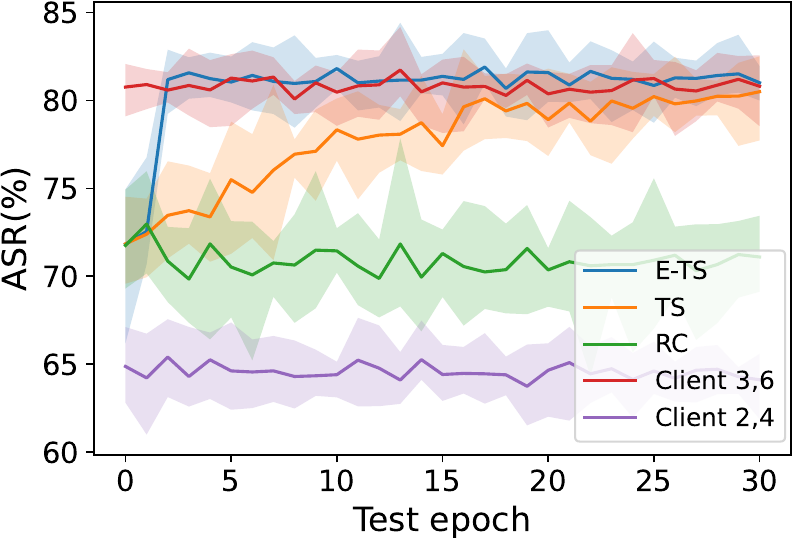}
  \subfigure{\scriptsize{\quad (i) $\beta$ = 0.05}}
\endminipage\hfill
\minipage{0.10\textwidth}%
\text{\quad}
\endminipage\hfill
\minipage{0.27\textwidth}%
\centering\includegraphics[width=\columnwidth]{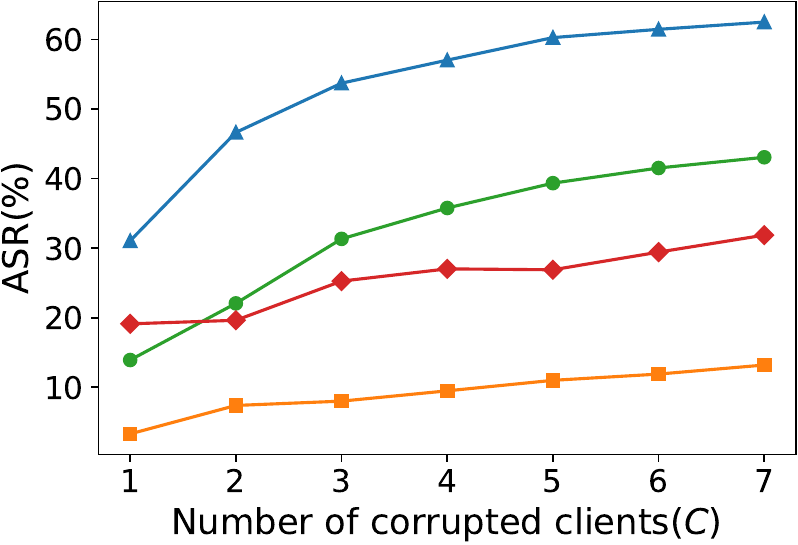}
  \subfigure{\scriptsize{\quad (a) Targeted ($\beta$ = 0.3)}}%
\endminipage\hfill
\minipage{0.265\textwidth}%
\centering\includegraphics[width=\columnwidth]{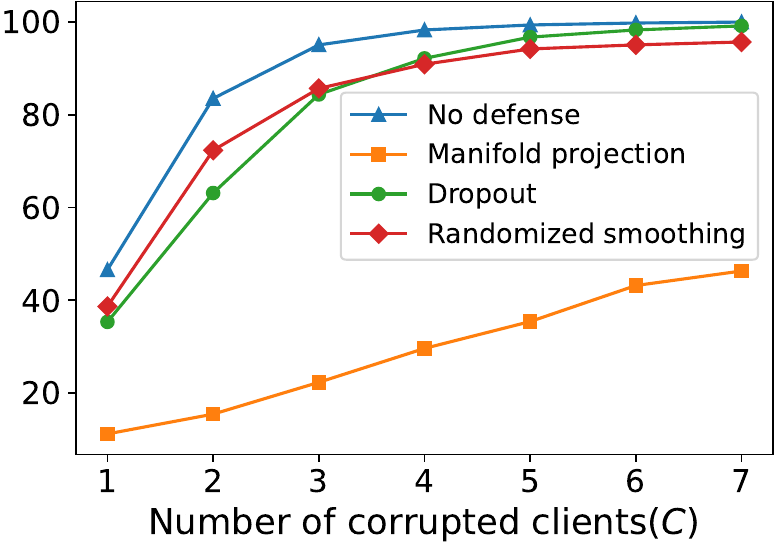}
\subfigure{\scriptsize{\quad (b) Untargeted ($\beta$ = 0.1)}}%
\endminipage\hfill
\minipage{0.07\textwidth}%
\quad
\endminipage\hfill

\vspace{-1mm}
\minipage{0.34\textwidth}
  \centering
  \vspace{-3mm}
   {\caption{Attack performance on six datasets of distinct VFL tasks.}\label{attack}}
\endminipage
\minipage{0.07\textwidth}
  \centering
   {\quad}
\endminipage
\minipage{0.55\textwidth}
  \centering
  \vspace{-3mm}
   {\caption{Attack performance on FashionMNIST under different defense strategies.}\label{defense}}
\endminipage
\hfill 

\end{figure*}
\vspace{-3mm}
\begin{figure*}[ht]
\minipage{0.258\textwidth}
\centering\includegraphics[width=\columnwidth]{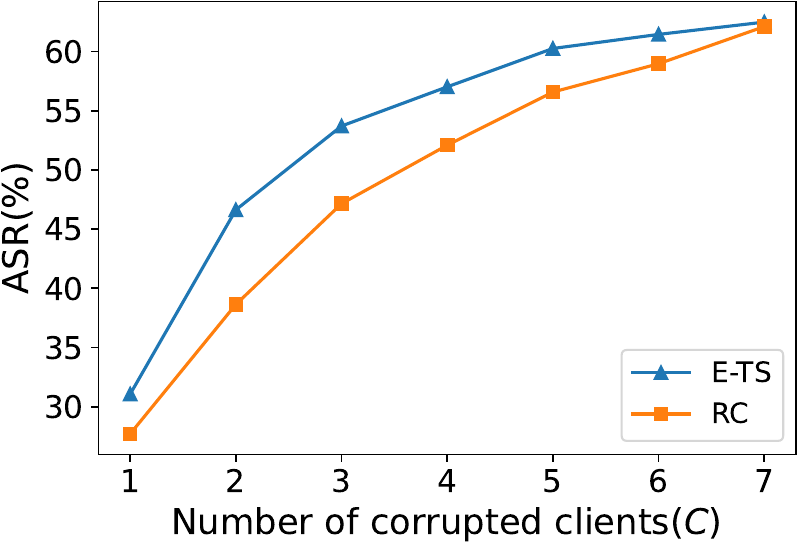}
  \subfigure{\scriptsize{\quad (a) Corruption constraint ($C$)}}%
\endminipage\hfill
\minipage{0.246\textwidth}
\centering\includegraphics[width=\columnwidth]{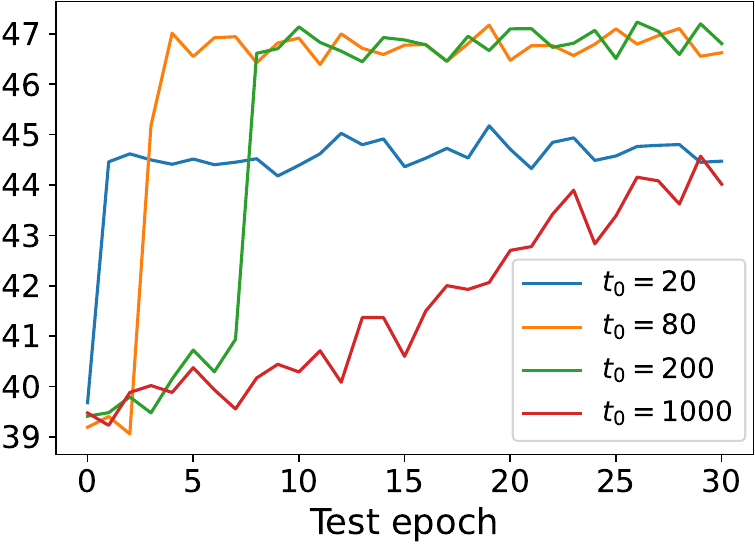}
  \subfigure{\scriptsize{ (b) Number of warm-up rounds ($t_0$)}}%
\endminipage\hfill
\minipage{0.246\textwidth}
  \centering
  \includegraphics[width=\columnwidth]{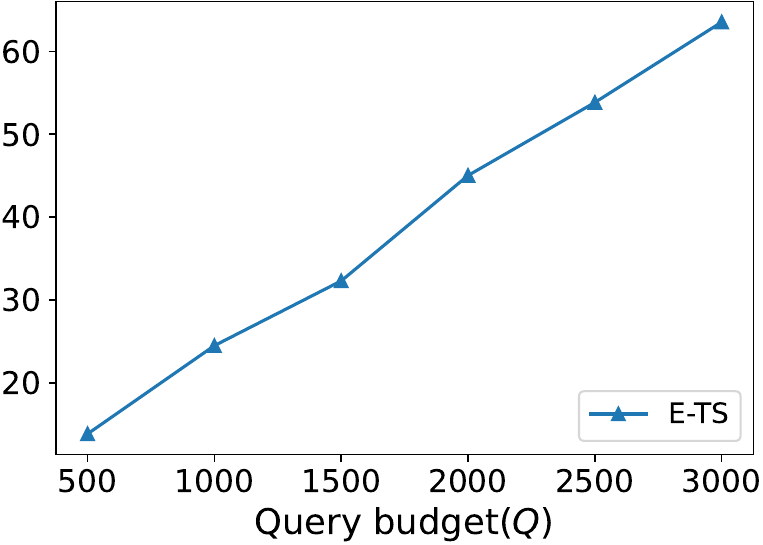}
  \subfigure{\scriptsize{\quad (c) Query budget ($Q$)}}
\endminipage\hfill
\minipage{0.246\textwidth}%
  \centering
  \includegraphics[width=\linewidth]{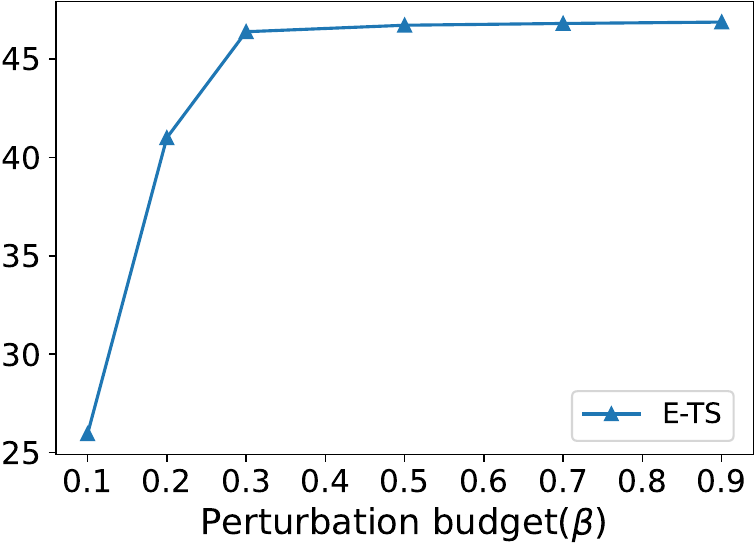}
  \subfigure{\scriptsize{\quad (d) Perturbation budget ($\beta$)}}
  \endminipage
\quad \\

\vspace{-6mm}
\caption{Targeted attack performance on FashionMNIST using different parameters.
}
\vspace{-4mm}
\label{fg2}
\end{figure*}

\vspace{-3mm}
\begin{figure*}[ht]
\minipage{0.256\textwidth}
\centering\includegraphics[width=\columnwidth]{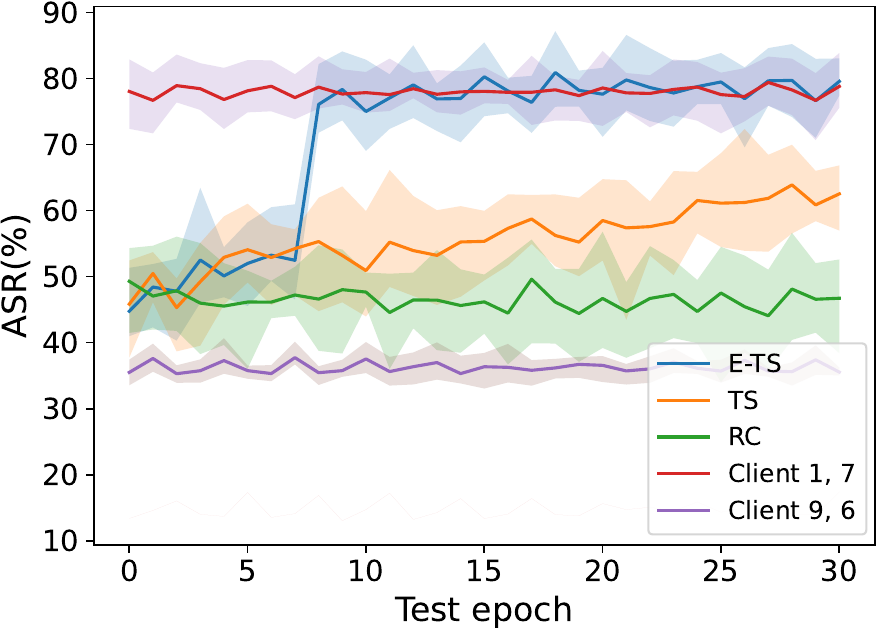}
  \subfigure{\scriptsize{\quad (a) Corrupt 2 out of 16 clients}}%
\endminipage\hfill
\minipage{0.246\textwidth}
\centering\includegraphics[width=\columnwidth]{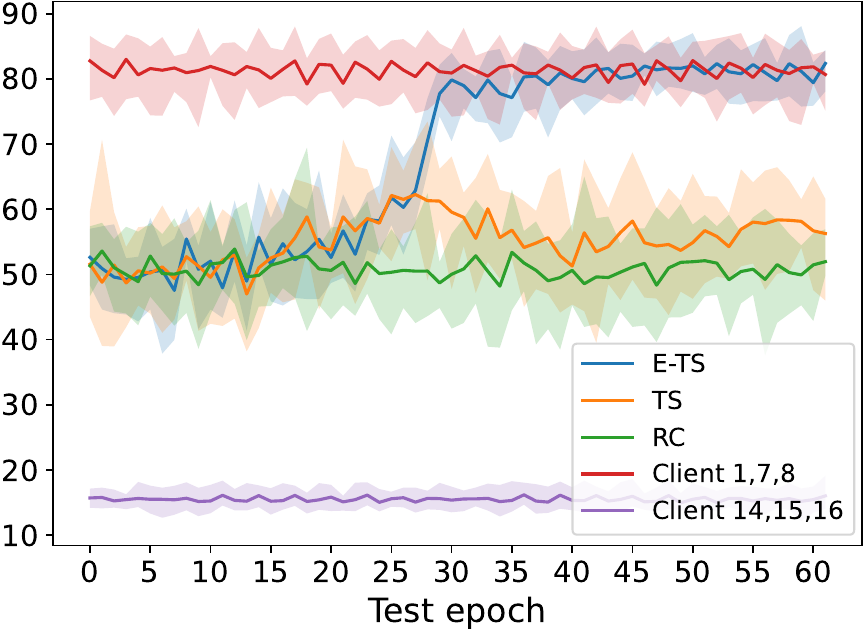}
  \subfigure{\scriptsize{ (b) Corrupt 3 out of 16 clients}}%
\endminipage\hfill
\minipage{0.246\textwidth}
  \centering
  \includegraphics[width=\columnwidth]{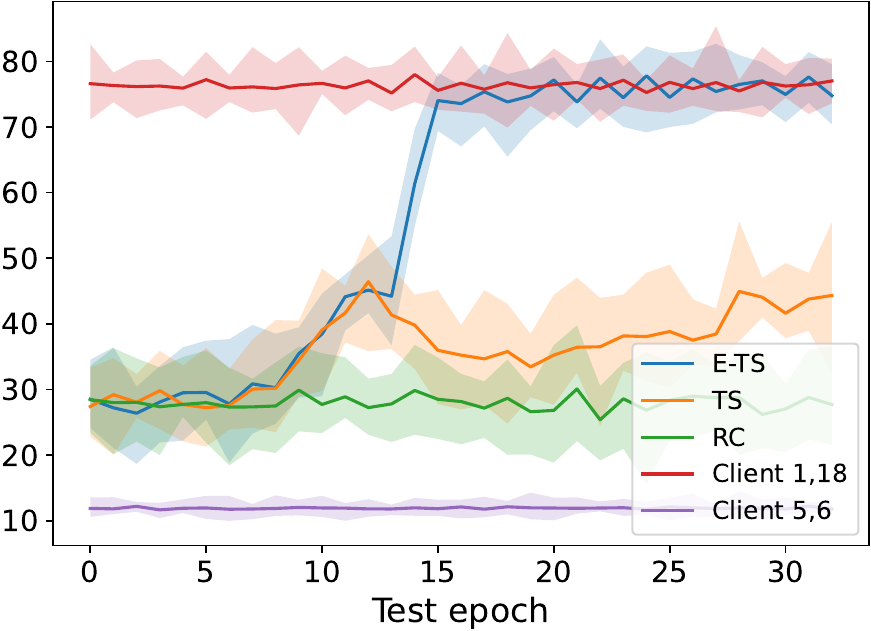}
  \subfigure{\scriptsize{\quad (c) Corrupt 2 out of 28 clients}}
\endminipage\hfill
\minipage{0.246\textwidth}%
  \centering
  \includegraphics[width=\linewidth]{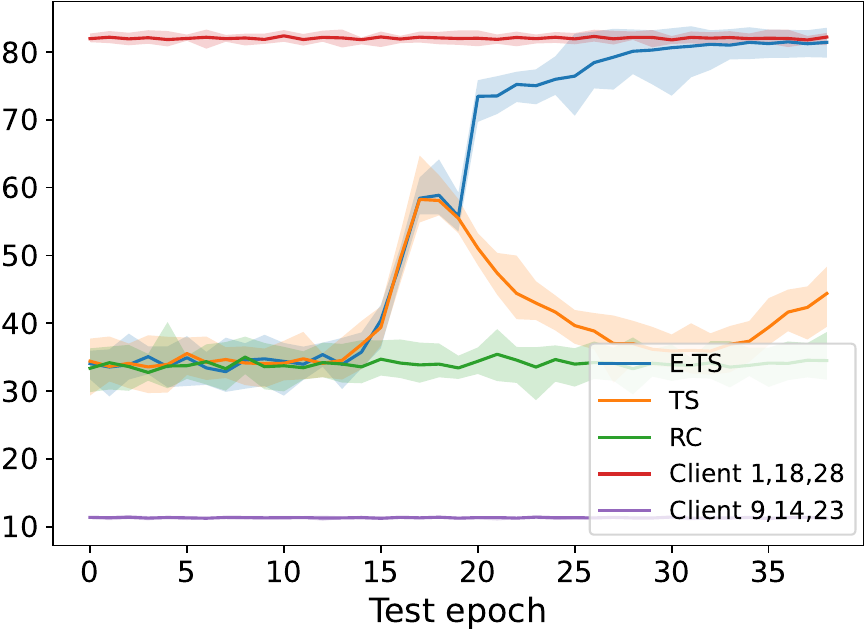}
  \subfigure{\scriptsize{\quad (d) Corrupt 3 out of 28 clients}}
  \endminipage
\quad \\
\vspace{-6mm}
\caption{Targeted attack performance on FashionMNIST with larger search space }
\vspace{-3mm}
\label{fg3}
\end{figure*}
\vspace{5mm}
{\bf Ablation study.} We evaluate the effects of system parameters, including corruption constraint $C$, query budget $Q$, and perturbation budget $\beta$, and the design parameter, the number of warm-up rounds $t_0$, on the performance of the proposed attack. Besides, we test the attack performance under a larger search space. 
As shown in Figure~\ref{fg2}(a), ASR increases as more clients are corrupted, and E-TS consistently outperforms random corruption. It is illustrated in Figure~\ref{fg2}(b) that it is critical to select the appropriate number of warm-up rounds $t_0$ at the beginning of E-TS. When $t_0$ is too small, i.e., $t_0 = 20$, it leads to an inaccurate estimate of the empirical competitive set which may exclude the best arm, causing E-TS to converge on a sub-optimal arm. However, if $t_0$ is too large, i.e., $t_0 = 200$ or $1000$, the advantage over plain TS diminishes. That is, one needs to optimize $t_0$ to find the optimal arm with the fastest speed. Figure~\ref{fg2}(c) and (d) show that ASR generally increases with larger $Q$ and $\beta$. Nevertheless, after reaching 0.3, increasing perturbation budget has negligible effect on improving ASR. Figure~\ref{fg3} shows that E-TS consistently outperforms baselines in larger exploration space, i.e., when there are $\binom{16}{2} = 120$, $\binom{16}{3} = 560$, $\binom{28}{2} = 378$, and $\binom{28}{3} = 3276$ choices. Notably, this performance gap between E-TS and TS becomes even more pronounced when the exploration space is expanded, demonstrating its effectiveness in handling larger exploration spaces. 
More experimental results of corrupting different numbers of clients on other datasets are provided in Appendix~\ref{sec:ext_result}. We also investigated the dynamics of arm selection and empirical competitive set in TS and E-TS (in Appendix~\ref{dynamics}), minimum query budget and corruption channels to achieve 50\% ASR (in Appendix~\ref{minimum}), the E-TS performance in large exploration spaces (in Appendix~\ref{large}), and the optimal choice on the warm-up round $t_0$(in Appendix~\ref{warm}).

{\bf Defenses.} We further evaluate the effectiveness of the proposed attack under the following common defense strategies.  
\textbf{Randomized smoothing (\cite{cohen2019certified}):} The main idea is to smooth out the decision boundary of a classifier, such that it's less sensitive to small perturbations in the input data. To construct a smooth VFL classifier, Gaussian noises are added to clients' embeddings, which are then processed by the top model to make a prediction. The final prediction is obtained by majority voting over 100 such trials; \textbf{Dropout (\cite{qiu2022hijack}):} 
A dropout layer is added after each activation layer in the server's top model to improve the robustness. Here, we set the dropout rate to 0.3; \textbf{Manifold projection (\cite{meng2017magnet,lindqvist2018autogan}):} An autoencoder is incorporated into the model as a pre-processing step before the top model. During training, the autoencoder is trained using clean embeddings and designed to reconstruct the original embeddings. 
During inference, the clients' embeddings are first processed using the autoencoder before being passed to the top model for prediction.


As shown in Figure~\ref{defense}, for a targeted attack with $\beta = 0.3$, the ASR of the proposed attack reduces under all considered defenses; for an untargeted attack when $\beta = 0.1$, the ASRs experience marginal reductions under the randomized smoothing and dropout defenses, but significant drops of $35\% \sim 72\%$ in the ASR under manifold projection. The advantage of manifold projection can be attributed to the learning of the manifold structure and the transformation of adversarial embeddings into clean embeddings.
Overall, while manifold projection exhibits the strongest capability in defending the proposed attack, it fails to completely eliminate all AEs.

\vspace{-3mm}
\section{Related work}
\vspace{-3mm}
\textbf{AE generation for ML models.}
AE generation methods can be generally classified into two categories: white-box and black-box settings. While the former assumes the adversary knows full knowledge of model parameters and architectures, the latter assumes no prior knowledge of either the models or training data. Our work is concerned with a black-box setting, which is typically addressed using either transfer-based or query-based solutions.
Transfer-based methods~\cite{papernot2016transferability, liu2016delving} generate AEs using a substitute model, which is trained either by querying the model's output or using a subset of training data.
Query-based methods~\cite{bhagoji2018practical,chen2017zoo} optimize AEs utilizing gradient information, which is estimated through the queried outputs. 
One classical example is the ZOO attack~\cite{chen2017zoo}, which employs zeroth-order stochastic coordinate descent for gradient estimation. 

\textbf{MAB algorithms.} 
Multiple classical algorithms, such as $\epsilon$-greedy~\cite{sutton2018reinforcement}, Upper Confidence Bounds (UCB)~\cite{lai1985asymptotically,garivier2011kl}, and Thompson sampling (TS)~\cite{agrawal2012analysis}, are proposed to solve the MAB problem. 
Recent advancements have proposed variants of MAB under different settings, leveraging additional information to minimize the exploration. These include correlated arm bandit~\cite{gupta2021multi}, contextual bandit~\cite{singh2020contextual,chu2011contextual}, and combinatorial bandit~\cite{chen2013combinatorial}. However, their application in the context of adversarial attacks, particularly in VFL, remains largely unexplored. 

\vspace{-3mm}
\section{Conclusion}
\vspace{-3mm}
We propose a novel attack, for an adversary who can adaptively corrupt a certain number of communication channels between a client and the server, to generate AEs for inference of VFL models. Specifically, we formulate the problem of adaptive AE generation as an online optimization problem, and decompose it into an adversarial example generation (AEG) problem and a corruption pattern selection (CPS) problem. We transform the CPS problem into an MAB problem, and propose a novel Thompson Sampling with Empirical maximum reward
(E-TS) algorithm to find the optimal corruption pattern. 
We theoretically characterize the expected regret bound of E-TS, and perform extensive experiments on various VFL tasks to substantiate the effectiveness of our proposed attack. 


\vspace{-4mm}
\section*{Acknowledgment}
\vspace{-4mm}
This work is in part supported by the National Nature Science Foundation of China (NSFC) Grant 62106057.

\bibliography{reference}

\newpage
\appendix
\section*{Appendix}
\section{AE generation algorithm}\label{sec:AE}
The function~(\ref{gradient}) is used to estimate the gradient from the Natural evolution strategy (NES)~\cite{ilyas2018black}. The detailed method for zeroth-order AE generation in VFL is presented in Algorithm~\ref{AE_alg}. In step 6, we use antithetic sampling to generate noise for efficiency. 

\begin{equation}\label{gradient}
\nabla_{\boldsymbol{\eta}_i^t}  L(\boldsymbol{\eta}_i^t, \mathcal{C}^t) \approx \frac{1}{\sigma n} \sum_{j=1}^{n} \boldsymbol{\delta}_{j} L\left(\boldsymbol{\eta}_i^t+\sigma \boldsymbol{\delta}_{j}, \mathcal{C}^t\right),
\end{equation}
\begin{algorithm}[H]
\caption{Zeroth-order AE generation in VFL}\label{AE_alg}
\begin{algorithmic}[1]

\STATE {\bfseries Input:} Batch $[B^t]$, adversarial embedding $\boldsymbol{h}_{i,a}^t$, benign embedding $\boldsymbol{h}_{i,b}^t, i\in [B^t]$, corruption pattern $\mathcal{C}^t$, learning rate $lr$, the sample size of the Gaussion noise $n$, the perturbation budget $\beta$, query budget $Q$, and the embedding range $[lb_i,ub_i], i\in [B^t]$.

\STATE {\bfseries Initialization:} 
$\boldsymbol{\eta}^t_{i,m} = \boldsymbol{0}, m\in[M]$ , $\boldsymbol{\eta}^t_i = [\boldsymbol{\eta}^t_{i,a_1},\ldots,\boldsymbol{\eta}^t_{i,a_C}]$, counter $s = 0$.
\FOR{$i\in[B^t]$}
\FOR{$q \in [\frac{Q}{n} ]$}
    \STATE Clamp the perturbation to $\|\boldsymbol{\eta}^t_i\|_\infty \leq \beta(ub_i-lb_i)$.
    \STATE Make a query to the server with adversarial embedding $\Tilde{\boldsymbol{h}}^t_{i,a} = \boldsymbol{h}^t_{i,a} + \boldsymbol{\eta}^t_i$
    \IF{the attack is not successful}
 
\STATE Initiate $\frac{n}{2}$ noise vectors $\boldsymbol{\delta}_v \sim \mathcal{N}(0, I), v \in \{1, ..,\frac{n}{2}\}$, another $\frac{n}{2}$ noise vectors are $\boldsymbol{\delta}_u = -\boldsymbol{\delta}_v, u \in \{\frac{n}{2},...,n\}$. 
    \STATE Clamp the perturbation to $\|\boldsymbol{\eta}^t_i + \boldsymbol{\delta}_j\|_\infty \leq \beta(ub_i-lb_i)$, where $j\in[n]$.
    \STATE Make $n$ queries to the server and estimate the gradient $\hat{\boldsymbol{G}}$ through function (\ref{gradient}).
    \STATE  Update the perturbation $\boldsymbol{\eta}_i^t = \boldsymbol{\eta}_i^t - lr * \hat{\boldsymbol{G}} $.
    \ELSE
    \STATE Break the loop, store $\boldsymbol{\eta}_i^t$ and $s = s +1$.
   \ENDIF
   \ENDFOR
   \ENDFOR
   \STATE Clamp $\|\boldsymbol{\eta}_i^t\|_{\infty}\leq \beta(ub_i-lb_i), i\in[B^t]$, return $\boldsymbol{\eta}_i^t$ and the attack success rate $ \frac{s}{B^t}$.
\end{algorithmic}
\end{algorithm}

\section{Proofs in Section Regret analysis}\label{sec:proof}

In this section, we provide detailed proofs of lemmas and the theorem in \textbf{Section 6 Regret Analysis} of our paper. 
We initiate the proof procedure by establishing the definitions for two key events and three supporting facts, intended to streamline the proof process. 
\begin{fact}
    [Hoeffding's inequality]\label{fact} Let $X_1,\ldots,X_n$ be independent i.i.d. random variables bounded in $[a,b]$, then for any $\delta > 0$, we have $$\operatorname{Pr}\left(\left|\frac{\sum_{i=1}^nX_i}{n} - \mathbb{E}(X_i)\right|\geq \delta\right) \leq 2\exp{\left(\frac{-2n\delta^2}{(b-a)^2}\right)}.
$$
\end{fact}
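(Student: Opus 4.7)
The plan is to prove Hoeffding's inequality by the classical Chernoff-bound plus Hoeffding-lemma route. First I would center the variables: set $Y_i = X_i - \mathbb{E}[X_i]$, which are i.i.d., zero-mean, and bounded in an interval of width $b-a$. By a union bound
\begin{equation*}
\operatorname{Pr}\!\left(\left|\tfrac{1}{n}\textstyle\sum_{i=1}^n X_i - \mathbb{E}[X_i]\right|\geq \delta\right) \leq \operatorname{Pr}\!\left(\textstyle\sum_{i=1}^n Y_i \geq n\delta\right) + \operatorname{Pr}\!\left(\textstyle\sum_{i=1}^n Y_i \leq -n\delta\right),
\end{equation*}
so it suffices to bound each one-sided tail by $\exp\!\big(-2n\delta^2/(b-a)^2\big)$; the two contributions then combine to give the factor of $2$. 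For the upper tail, the exponential Markov (Chernoff) trick gives, for any $\lambda>0$,
\begin{equation*}
\operatorname{Pr}\!\left(\textstyle\sum_{i=1}^n Y_i \geq n\delta\right) \leq e^{-\lambda n \delta}\, \mathbb{E}\!\left[e^{\lambda \sum_i Y_i}\right] = e^{-\lambda n \delta} \prod_{i=1}^n \mathbb{E}\!\left[e^{\lambda Y_i}\right],
\end{equation*}
where the factorization uses independence of the $Y_i$.

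The crux of the argument is Hoeffding's lemma: for any zero-mean $Y$ taking values in $[a',b']$, one has $\mathbb{E}[e^{\lambda Y}]\leq \exp(\lambda^2(b'-a')^2/8)$. I would prove this by exploiting convexity of $y \mapsto e^{\lambda y}$ on $[a',b']$ to write $e^{\lambda y} \leq \frac{b'-y}{b'-a'}e^{\lambda a'} + \frac{y-a'}{b'-a'}e^{\lambda b'}$, taking expectations, and using $\mathbb{E}[Y]=0$ to obtain an upper bound that I reparametrize as $\exp(\phi(u))$, where $u=\lambda(b'-a')$, $p=-a'/(b'-a')\in[0,1]$, and $\phi(u) = -pu + \log(1-p+pe^u)$. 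A direct computation gives $\phi(0)=0$, $\phi'(0)=0$, and, letting $q(u) = pe^u/(1-p+pe^u) \in [0,1]$, one finds $\phi''(u) = q(u)(1-q(u)) \leq 1/4$. Taylor's theorem with remainder then yields $\phi(u) \leq u^2/8$, proving the lemma.

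Assembling the pieces, each $\mathbb{E}[e^{\lambda Y_i}]$ is bounded by $\exp(\lambda^2(b-a)^2/8)$, giving
\begin{equation*}
\operatorname{Pr}\!\left(\textstyle\sum_{i=1}^n Y_i \geq n\delta\right) \leq \exp\!\left(-\lambda n\delta + \tfrac{n\lambda^2 (b-a)^2}{8}\right).
\end{equation*}
Optimizing over $\lambda$ by setting $\lambda^\star = 4\delta/(b-a)^2$ minimizes the exponent to $-2n\delta^2/(b-a)^2$. Applying the identical argument to $-Y_i$ handles the lower tail with the same bound, and adding the two tails yields the claimed $2\exp(-2n\delta^2/(b-a)^2)$. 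The main obstacle is the proof of Hoeffding's lemma, specifically the uniform bound $\phi''(u)\leq 1/4$; this requires a short but careful algebraic simplification, though everything else reduces to routine manipulation of the Chernoff bound and independence.
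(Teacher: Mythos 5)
Your proposal is correct. The paper does not prove this statement at all: it is recorded as Fact~1 and invoked as a standard, known result, so there is no in-paper argument to compare against. Your derivation — centering, splitting the two-sided tail by a union bound, the Chernoff exponential-moment bound, Hoeffding's lemma via convexity and the second-order Taylor bound $\phi''(u)=q(u)(1-q(u))\leq 1/4$, and optimizing $\lambda^\star=4\delta/(b-a)^2$ — is the classical textbook proof, and every step, including the final exponent $-2n\delta^2/(b-a)^2$, checks out.
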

\begin{fact}
   [Abramowitz And Stegun 1964]\label{fact2} For a Gaussian distributed random variable $Z$ with mean $m$ and variance $\sigma^2$, for any $z$,
$$
\frac{1}{4 \sqrt{\pi}} \cdot e^{-7 z^2 / 2}<\operatorname{Pr}(|Z-m|>z \sigma) \leq \frac{1}{2} e^{-z^2 / 2}
$$
\end{fact}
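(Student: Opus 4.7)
The plan is to reduce to a standard normal and then establish each tail bound by elementary calculus on the Gaussian density, exploiting the fact that both constants in the statement are deliberately loose. First I would set $Y := (Z-m)/\sigma \sim \mathcal{N}(0,1)$, so that $\Pr(|Z-m| > z\sigma) = \Pr(|Y| > z)$. The case $z \leq 0$ is trivial, and for $z > 0$ symmetry gives $\Pr(|Y|>z) = 2\Pr(Y>z) = \tfrac{2}{\sqrt{2\pi}} \int_z^\infty e^{-t^2/2}\,dt$, reducing the proof to bounding this single Gaussian integral from above and below.

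For the upper bound I would avoid the classical Mills-ratio estimate (which is vacuous near $z=0$) in favor of a monotonicity trick: define $g(z) := \tfrac{1}{2} e^{-z^2/2} - \Pr(Y > z)$, compute $g'(z) = e^{-z^2/2}\bigl(1/\sqrt{2\pi} - z/2\bigr)$, and observe that $g(0)=0$, that $g$ rises to a maximum at $z=\sqrt{2/\pi}$, and then decays back to $0$ as $z\to\infty$; hence $\Pr(Y>z) \leq \tfrac{1}{2} e^{-z^2/2}$ for all $z \geq 0$. For the lower bound I would localize the Gaussian integral to a unit interval and use pointwise dominance, $\int_z^{z+1} e^{-t^2/2}\,dt \geq e^{-(z+1)^2/2}$, then expand $(z+1)^2/2 = z^2/2 + z + 1/2$ and apply the crude completing-the-square bound $z \leq 3z^2 + \tfrac{1}{12}$ to obtain $(z+1)^2/2 \leq 7z^2/2 + 7/12$. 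Folding $e^{-7/12}\cdot 2/\sqrt{2\pi}$ into the prefactor then yields the stated form $\tfrac{1}{4\sqrt{\pi}} e^{-7z^2/2}$, with room to spare.

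The main obstacle is bookkeeping of constants rather than any delicate estimate. The exponent $7$ in the lower bound is wasteful on purpose, chosen so that the single closed-form expression remains valid uniformly across all $z>0$ without case analysis, and the prefactor $\tfrac{1}{4\sqrt{\pi}}$ absorbs the leftover arithmetic slack. A subtler point worth flagging is that the two-sided upper bound as literally written is slightly violated for $z$ near $1$ (e.g., $\Pr(|Y|>1)\approx 0.317 > \tfrac{1}{2}e^{-1/2}\approx 0.303$); what the monotonicity argument actually establishes, and what the Thompson-sampling regret analysis genuinely invokes, is the one-sided version $\Pr(Z - m > z\sigma) \leq \tfrac{1}{2} e^{-z^2/2}$, valid for all $z \geq 0$ without exception. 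The Fact should be read in that spirit, with the two-sided writing interpreted in the tail regime where it is applied.
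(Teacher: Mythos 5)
The paper does not actually prove this Fact: it is imported as a black-box citation to Abramowitz and Stegun (the same statement, with the same constants, that appears in Agrawal--Goyal's Thompson sampling analysis), so there is no in-paper argument to compare yours against. Your self-contained proof is correct. The reduction to the standard normal, the monotonicity argument for the one-sided upper bound (with $g(0)=0$, $g$ increasing up to $z=\sqrt{2/\pi}$ and decreasing to $0$ thereafter), and the lower bound via $\int_z^{z+1}e^{-t^2/2}\,dt \ge e^{-(z+1)^2/2}$ together with $3z^2 - z + \tfrac{1}{12} = 3(z-\tfrac16)^2 \ge 0$ all check out; the resulting prefactor $\tfrac{2}{\sqrt{2\pi}}e^{-7/12} \approx 0.445$ comfortably exceeds $\tfrac{1}{4\sqrt{\pi}} \approx 0.141$, and strictness follows from discarding the positive tail beyond $z+1$.

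Your caveat is also correct and worth amplifying: the two-sided upper bound as written is false not only near $z=1$ but on the whole range $0 \le z \lesssim 1.07$ --- already at $z=0$ the left side is $\Pr(|Z-m|>0)=1$ while the right side is $\tfrac12$. Only the one-sided version $\Pr(Z-m > z\sigma) \le \tfrac12 e^{-z^2/2}$ (equivalently, its reflection) holds for all $z \ge 0$, which is exactly what your monotonicity argument establishes. This is harmless for the paper because every downstream invocation of Fact~\ref{fact2} (e.g., inequality $(c)$ in the proof of Lemma~\ref{lm3}, which bounds $\Pr(\theta_1(\tau) < \hat{\mu}_1(\tau) - \Delta_\ell/4)$ and $\Pr(\theta_\ell(\tau) > \mu_\ell + \Delta_\ell/2)$ for the Gaussian posterior samples) is a one-sided deviation, but the Fact as stated should really carry either the one-sided form or a restriction on $z$.
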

\begin{fact}
    [Concentration Bounds]\label{fact3} Let $X_1, \ldots, X_n$ be 0-1-valued random variables. Suppose that there are $0 \leqslant \delta_i \leqslant 1$, for $1 \leqslant i \leqslant n$, such that, for every set $S \subseteq[n], \operatorname{Pr}\left[\wedge_{i \in S} X_i=1\right] \leqslant$ $\prod_{i \in S} \delta_i$. Let $\delta=(1 / n) \sum_{i=1}^n \delta_i$. Then, for any $\gamma$ such that $\delta \leqslant \gamma \leqslant 1$, we have $\operatorname{Pr}\left[\sum_{i=1}^n X_i \geqslant \gamma n\right] \leqslant e^{-n D(\gamma \| \delta)}$, where $D(a\|b)$ is the cross entropy of $a$ and $b$.
\end{fact}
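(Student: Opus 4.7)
The plan is to apply the standard Chernoff/exponential-moment method, but instead of factorizing the moment generating function as in the independent case (which we cannot do here), to expand it multilinearly so that the only probabilistic object that appears is $\Pr[\bigwedge_{i\in S}X_i = 1]$---precisely what the hypothesis controls. Combining this with a single application of the arithmetic-geometric mean (AM-GM) inequality will symmetrize the $\delta_i$ into $\delta$, after which a one-parameter optimization recovers the KL-divergence exponent.

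In detail, fix $t \geq 0$. Markov's inequality applied to $\exp(t\sum_i X_i)$ gives
$$\Pr\!\left[\sum_{i=1}^n X_i \geq \gamma n\right] \leq e^{-t\gamma n}\,\mathbb{E}\!\left[e^{t\sum_i X_i}\right].$$
Since each $X_i \in \{0,1\}$, one has the pointwise identity $e^{tX_i} = 1 + (e^t-1)X_i$, and expanding the resulting product yields
$$e^{t\sum_i X_i} = \prod_{i=1}^n\bigl(1 + (e^t-1)X_i\bigr) = \sum_{S \subseteq [n]}(e^t-1)^{|S|}\prod_{i \in S}X_i.$$
The product $\prod_{i \in S} X_i$ is the $\{0,1\}$-indicator of the event $\bigwedge_{i \in S}X_i = 1$, so taking expectations and invoking the hypothesis term-by-term---legitimate because $(e^t-1)^{|S|}\geq 0$ for $t \geq 0$---gives
$$\mathbb{E}\!\left[e^{t\sum_i X_i}\right] \leq \sum_{S \subseteq [n]}(e^t-1)^{|S|}\prod_{i\in S}\delta_i = \prod_{i=1}^n\bigl(1 + (e^t-1)\delta_i\bigr).$$
AM-GM applied to the $n$ nonnegative factors on the right now gives $\prod_i(1+(e^t-1)\delta_i) \leq (1+(e^t-1)\delta)^n$, producing the bound $\Pr[\sum_i X_i \geq \gamma n] \leq [e^{-t\gamma}(1+(e^t-1)\delta)]^n$ valid for every $t \geq 0$.

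It remains to optimize the bracket. Differentiating its logarithm in $t$ and setting the derivative to zero yields the stationary point $e^t = \gamma(1-\delta)/[\delta(1-\gamma)]$, which is $\geq 1$ (hence $t \geq 0$ is feasible) precisely because $\gamma \geq \delta$. Substituting back collapses the bracket to $(\delta/\gamma)^\gamma\bigl((1-\delta)/(1-\gamma)\bigr)^{1-\gamma}$, whose logarithm equals $-D(\gamma\|\delta)$ with $D(\gamma\|\delta) = \gamma\log(\gamma/\delta) + (1-\gamma)\log((1-\gamma)/(1-\delta))$; raising to the $n$-th power gives $e^{-nD(\gamma\|\delta)}$, the claimed bound. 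There is no serious obstacle once the framework is chosen; the one conceptual step that must be gotten right is the multilinear expansion, since the usual independence-based factorization of the MGF is unavailable and only the subset-of-ones probabilities are controlled. The remaining steps---AM-GM and the Chernoff optimization---are routine, and the feasibility condition $t\geq 0$ at the optimum exactly matches the stated regime $\gamma \geq \delta$.
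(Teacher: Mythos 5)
Your proof is correct. Note that the paper itself gives no proof of this Fact---it is quoted as a known result (it is the Panconesi--Srinivasan generalization of the Chernoff bound for random variables that are only ``negatively correlated'' in the sense of the subset-product hypothesis)---so there is no in-paper argument to compare against; what you have written is essentially the classical derivation of that result. Each step checks out: the identity $e^{tX_i}=1+(e^t-1)X_i$ for $\{0,1\}$-valued variables, the multilinear expansion whose monomials $\E[\prod_{i\in S}X_i]=\Pr[\wedge_{i\in S}X_i=1]$ are exactly the quantities the hypothesis bounds (with nonnegative coefficients for $t\ge 0$, so term-by-term substitution is valid), the AM--GM symmetrization to $(1+(e^t-1)\delta)^n$, and the Chernoff optimization giving $e^{t}=\gamma(1-\delta)/(\delta(1-\gamma))\ge 1$ precisely when $\gamma\ge\delta$. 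One small but worthwhile observation: your computation shows the exponent is the binary \emph{relative entropy} (KL divergence) $D(\gamma\|\delta)=\gamma\log(\gamma/\delta)+(1-\gamma)\log((1-\gamma)/(1-\delta))$, whereas the paper's statement calls $D(a\|b)$ the ``cross entropy''; your derivation confirms that KL divergence is the correct reading. The only points left implicit are the degenerate cases $\gamma=1$ or $\delta=0$, where the optimizing $t$ is infinite; these follow by a limiting argument or directly from $\Pr[\wedge_i X_i=1]\le\prod_i\delta_i\le\delta^n$, and do not affect the substance of the proof.
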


\begin{definition}
    [Events $E_1(t)$ and $E_2(t)$]$E_1(t)$ is the event that the optimal arm 1 satisfies $n_1(t) < \frac{(t-1)}{N}, \forall t\in[T - t_0]$. $E_2(t)$ is the event that the optimal arm 1 is not identified in the empirical competitive set $\mathcal{E}^t$ at round $t$, $t>t_0$.
\end{definition}
Based on the above facts and the definition, we then provide the following lemmas.

\begin{lemma}\label{lm3}
Let $\gamma = \frac{N-1}{N}$, and $\delta = (N-1)( \frac{1}{2}\exp(-\Delta_{\min}^2/16) + 2\exp(-\Delta_{\min}^2/4) -\frac{1}{2}\exp(-5\Delta_{\min}^2/16) + \exp \left( -\frac{(t_0-1)\Delta_{\min}^2}{2N}\right) +\exp(-\Delta_{\min}^2))$. The probability of event $E_1(t)$ is upper bounded by $ \operatorname{Pr}(n_1(t) < \frac{t - 1}{N}) \leq \exp\left(-t D(\gamma \| \delta)\right)$.
\end{lemma}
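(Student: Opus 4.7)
The plan is to translate the event $\{n_1(t) < (t-1)/N\}$ into a concentration statement about the fraction of rounds in which a suboptimal arm is pulled, and then invoke Fact~\ref{fact3}. Define the indicators $X_\tau := \mathbbm{1}\{k(\tau) \neq 1\}$ for $\tau = 1,\ldots,t-1$. The event $\{n_1(t) < (t-1)/N\}$ is identical to $\{\sum_{\tau=1}^{t-1} X_\tau > (t-1)(N-1)/N\}$: a $\gamma = (N-1)/N$ fraction of rounds sees a suboptimal pull. If each $X_\tau$ has conditional probability at most $\delta$ given the past, the product-form hypothesis of Fact~\ref{fact3} follows by iterated conditioning, yielding the desired $\exp(-tD(\gamma\|\delta))$ bound.

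The heart of the argument is therefore the per-round bound $\Pr[X_\tau = 1 \mid \mathcal{F}_{\tau-1}] \leq \delta$. At round $\tau$, E-TS picks $k(\tau) = \arg\max_{k \in \mathcal{E}^\tau}\theta_k$ with $\theta_k \sim \mathcal{N}(\hat\mu_k(\tau),\hat\sigma_k(\tau))$. The event $\{X_\tau = 1\}$ requires that some $k \neq 1$ has $\theta_k > \theta_1$, so I union-bound over the $N-1$ suboptimal arms, producing the leading $(N-1)$ factor in $\delta$. For a fixed suboptimal $k$, I decompose $\{\theta_k > \theta_1\}$ by conditioning on the concentration events $\{|\hat\mu_k - \mu_k| \leq \Delta_{\min}/4\}$ and $\{|\hat\mu_1 - \mu_1| \leq \Delta_{\min}/4\}$. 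Off these events, Hoeffding's inequality (Fact~\ref{fact}) yields the contributions $2\exp(-\Delta_{\min}^2/4)$ and $\exp(-\Delta_{\min}^2)$, and the warm-up guarantee that $n_k(\tau) \geq (t_0-1)/N$ converts Hoeffding's sample-size dependence into the factor $\exp(-(t_0-1)\Delta_{\min}^2/(2N))$. On the concentration events, the Gaussian tail bounds of Fact~\ref{fact2} applied to $\theta_k$ and $\theta_1$ with mean gap $\Delta_k \geq \Delta_{\min}$ yield the Gaussian pair $\tfrac{1}{2}\exp(-\Delta_{\min}^2/16) - \tfrac{1}{2}\exp(-5\Delta_{\min}^2/16)$, where the negative contribution comes from the two-sided form of Fact~\ref{fact2} that subtracts lower-tail mass so the upper-tail bound is not over-counted.

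The main obstacle is producing the exact additive form of $\delta$, in particular justifying the negative contribution $-\tfrac{1}{2}\exp(-5\Delta_{\min}^2/16)$: this requires carefully partitioning $\{\theta_k > \theta_1\}$ into a favorable slice handled by the upper half of Fact~\ref{fact2} and an unfavorable slice where the lower half of Fact~\ref{fact2} subtracts overcounted mass, so the combined bound remains tight. A secondary technicality is handling the regimes $\tau \leq t_0$ and $\tau > t_0$ uniformly: during warm-up, $\mathcal{E}^\tau = [N]$ and $\hat\sigma_k(\tau)$ can be loose, but the preceding assumption that every arm is pulled at least twice guarantees Hoeffding is applicable at every round, so the same per-round $\delta$ works throughout. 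Once the per-round $\delta$ is established and the conditional independence needed for Fact~\ref{fact3} is verified (via iterated conditioning on $\mathcal{F}_{\tau-1}$), the stated bound $\exp(-tD(\gamma\|\delta))$ follows by plugging in $n = t-1$, with the factor-of-one looseness absorbed into the exponent.
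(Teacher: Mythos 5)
Your skeleton is the same as the paper's: define $X_\tau=\mathbbm{1}\{k(\tau)\neq 1\}$, show a per-round conditional bound $\operatorname{Pr}(X_\tau=1\mid\mathcal{F}_\tau)\leq\delta$, and feed it into Fact~\ref{fact3} with $\gamma=\frac{N-1}{N}$. However, there is a genuine gap in the per-round bound. You assert that $\{X_\tau=1\}$ \emph{requires} some $k\neq 1$ to have $\theta_k>\theta_1$, and you derive all five terms of $\delta$ from that single comparison event. This is only true for plain Thompson sampling. In E-TS, after the warm-up phase the sampling is restricted to the empirical competitive set $\mathcal{E}^\tau$, and arm $1$ can fail to be pulled simply because it is \emph{excluded} from $\mathcal{E}^\tau$ (i.e., $\hat{\varphi}_1(\tau)<\hat{\mu}_{k^{emp}(\tau)}(\tau)$), in which case $\theta_1$ is never sampled and no comparison $\theta_k>\theta_1$ need occur. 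The paper's proof explicitly splits $\operatorname{Pr}(X_\tau=1\mid\mathcal{F}_\tau)$ into $\sum_{\ell\neq 1}\operatorname{Pr}(\theta_1<\theta_\ell\mid\mathcal{F}_\tau)$ \emph{plus} $\sum_{\ell\neq 1}\operatorname{Pr}\big(\hat{\varphi}_1(\tau)<\hat{\mu}_\ell(\tau),\,n_\ell(\tau)\geq\frac{\tau-1}{N}\big)$, and it is this second (exclusion) term that produces the summands $\exp\big(-\frac{(t_0-1)\Delta_{\min}^2}{2N}\big)$ and $\exp(-\Delta_{\min}^2)$ in $\delta$. Your proposal instead attributes those two summands to Hoeffding concentration of $\hat{\mu}_1,\hat{\mu}_k$ inside the $\theta$-comparison, which both misassigns the constants and, more importantly, leaves the exclusion event entirely unbounded. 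Without controlling $\operatorname{Pr}(1\notin\mathcal{E}^\tau\mid\mathcal{F}_\tau)$ the claimed $\delta$ cannot be established.

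A secondary inaccuracy: the negative term $-\frac{1}{2}\exp(-5\Delta_{\min}^2/16)$ does not come from a ``two-sided'' use of Fact~\ref{fact2} subtracting lower-tail mass. It arises because the paper lower-bounds $\operatorname{Pr}(\theta_1\geq\mu_1-\frac{\Delta_\ell}{2})$ by the \emph{product} of two independent good events, $\operatorname{Pr}(\theta_1\geq\hat{\mu}_1-\frac{\Delta_\ell}{4})\cdot\operatorname{Pr}(\hat{\mu}_1-\frac{\Delta_\ell}{4}\geq\mu_1-\frac{\Delta_\ell}{2})\geq(1-\frac{1}{4}e^{-a})(1-e^{-b})$ (one factor from Fact~\ref{fact2}, one from Fact~\ref{fact}); complementing the expanded product yields the cross term $-\frac{1}{4}e^{-(a+b)}$, which with $n_1\geq 2$ becomes $-\frac{1}{4}\exp(-5\Delta_{\min}^2/16)$ per arm ($\theta_1$ and $\theta_\ell$ together giving the factor $\frac{1}{2}$). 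Your proposed mechanism for this term would not reproduce it. The fix is to add the competitive-set exclusion event as a separate disjunct in the per-round decomposition and bound it with Hoeffding applied to $\hat{\varphi}_1$ and $\hat{\mu}_\ell$, exactly as in the paper's inequality chain.
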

\begin{proof}
Let $X_\tau = 0$ denote the optimal arm 1 is pulled at $\tau$ round, and $X_\tau = 1$ denotes that the best arm is not pulled. Considering the probability $\operatorname{Pr}(n_1(t) < \frac{t - 1}{N}, t > t_0)$, we assume that each arm is pulled at least two times after the warm-up round $t_0$. 
Therefore, we can transform the probability $\operatorname{Pr}(n_1(t) < \frac{t}{N}, t > t_0)$ into $\operatorname{Pr}(\sum_{\tau = t_0 + 1}^t X_{\tau} > \frac{(N-1)}{N} t + \frac{2N + 1}{N}) \leq\operatorname{Pr}(\sum_{\tau = t_0 + 1}^t X_{\tau} > \frac{(N-1)}{N} t )  $.

In our algorithm, for every set $ S\subseteq [t - t_0]$, $\operatorname{Pr}\left(\wedge_{\tau \in S} X_{\tau}=1\right) = \prod_{\tau \in S }\operatorname{Pr}\left(X_\tau=1|\mathcal{F}_{\tau} \right)$, where $\mathcal{F}_{\tau}$ is the history of pulling the optimal arm 1 until round $\tau$. We first analyze the upper bound of the probability $\operatorname{Pr}\left(X_\tau=1 | \mathcal{F}_{\tau}\right)$, when $\tau \in [t-t_0]$. 

From our algorithm, we can derive that $ \operatorname{Pr}\left(X_\tau=1 | \mathcal{F}_{\tau}\right)\leq
\sum_{\ell \in [N]\setminus 1}\operatorname{Pr}\left( \theta_1(\tau) <\theta_{\ell}(\tau) | \mathcal{F}_{\tau}\right) \\+ \sum_{\ell \in [N]\setminus 1}\operatorname{Pr}\left(\hat{\varphi}_1(\tau) < \hat{\mu}_{\ell}(\tau), n_{\ell}(t)\geq \frac{(\tau-1)}{N}\right),$ where $\ell$ is a sub-optimal arm. We then analyze the bound of probablity $\operatorname{Pr}\left(X_\tau=1 | \mathcal{F}_{\tau} \right)$ as follows:
\begin{equation}
    \begin{aligned}
&\sum_{\ell \in [N]\setminus 1}\operatorname{Pr}\left(\theta_1(\tau) <\theta_{\ell}(\tau) | \mathcal{F}_{\tau}\right) + \sum_{\ell \in [N]\setminus 1}\operatorname{Pr}\left(\hat{\varphi}_1(\tau) < \hat{\mu}_{\ell}(\tau), n_{\ell}(\tau)\geq \frac{(\tau-1)}{N}\right)\\
&\leq \sum_{\ell \in [N]\setminus 1}\operatorname{Pr}\left(\left( \theta_1(\tau) < \mu_1 - \frac{\Delta_{\ell}}{2} \right) \bigcup \left( \theta_{\ell}(\tau) > \mu_1 - \frac{\Delta_{\ell}}{2}\right)\right)\\
&+\sum_{\ell \in [N]\setminus 1}\operatorname{Pr}\left(\left(\hat{\varphi}_1(\tau)< \mu_1 - \frac{\Delta_{\min}}{2}\right) \bigcup \left(\hat{\mu}_{\ell}(\tau) >\mu_1 - \frac{\Delta_{\min}}{2}\right), n_{\ell}(\tau)\geq \frac{(\tau-1)}{N}\right)\\
& \stackrel{(a)}\leq \sum_{\ell \in [N]\setminus 1}\operatorname{Pr}\left(\theta_1(\tau)< \mu_1 - \frac{\Delta_{\ell}}{2}\right) + \sum_{\ell \in [N]\setminus 1}\operatorname{Pr}\left(\theta_{\ell}(\tau) >\mu_{\ell} + \frac{\Delta_{\ell}}{2}\right)
\\
& +\sum_{\ell \in [N]\setminus 1}\operatorname{Pr}\left(\hat{\varphi}_1(\tau)< \frac{\sum_{t=1}^T\mathbb{E}[r^{\max}_1(t)]}{T} - \frac{\Delta_{min}}{2}\right)\\
&+ \sum_{\ell \in [N]\setminus 1}\operatorname{Pr}\left(\hat{\mu}_{\ell}(\tau) >\mu_{\ell} + \frac{\Delta_{min}}{2}, n_{\ell}(\tau)\geq \frac{(\tau-1)}{N}\right)\\
&\stackrel{(b)}\leq (N-1)( (\frac{1}{2}\exp(-\Delta_{\min}^2/16) + 2\exp(-\Delta_{\min}^2/4)\\
&-\frac{1}{2}\exp(-5\Delta_{\min}^2/16) + \exp \left( -\frac{(t_0-1)\Delta_{\min}^2}{2N}\right) +\exp(-\Delta_{\min}^2)),\label{eq16}    
\end{aligned}
\end{equation}
where we have $(a)$ from the union bound. For inequality $(b)$, our objective is to delineate the upper bounds of to derive the upper bound of $\operatorname{Pr}\left( \theta_1(\tau)< \mu_1 - \frac{\Delta_{\ell}}{2}\right) $ and $\operatorname{Pr}\left(\theta_{\ell}(\tau) >\mu_{\ell} + \frac{\Delta_{\ell}}{2}\right)$. To achieve this, we invert our approach to discuss the lower bounds of $\operatorname{Pr}\left( \theta_1(\tau)\geq \mu_1 - \frac{\Delta_{\ell}}{2}\right) $ and $\operatorname{Pr}\left( \theta_{\ell}(\tau) \leq\mu_{\ell} + \frac{\Delta_{\ell}}{2} \right)$. We first focus on the probability $\operatorname{Pr}\left(\theta_1(\tau)\geq \mu_1 - \frac{\Delta_{\ell}}{2}\right) $:

\begin{equation}
    \begin{aligned}
        \operatorname{Pr}\left(\theta_1(\tau)\geq \mu_1 - \frac{\Delta_{\ell}}{2}\right)& \geq \operatorname{Pr}\left(\theta_1(\tau)\geq \hat{\mu}_1(\tau) - \frac{\Delta_{\ell}}{4} \geq \mu_1 - \frac{\Delta_{\ell}}{2}\right) \\
        &= \operatorname{Pr}\left(\theta_1(\tau)\geq \hat{\mu}_1(\tau) - \frac{\Delta_{\ell}}{4}\right)\operatorname{Pr}\left(\hat{\mu}_1(\tau) - \frac{\Delta_{\ell}}{4} \geq \mu_1 - \frac{\Delta_{\ell}}{2}\right) \\     &\stackrel{(c)}\geq \left(1-\frac{1}{4}\exp(-n_1(\tau)\Delta_{\ell}^2/32)\right)
        \left(1-\exp(-n_1(\tau)\Delta_{\ell}^2/8)\right)\\
        &= 1- \frac{1}{4}\exp(-n_1(\tau)\Delta_{\ell}^2/32)-\exp(-n_1(\tau)\Delta_{\ell}^2/8)+ \frac{1}{4}\exp(-5n_1(\tau)\Delta_{\ell}^2/32),
    \end{aligned}
\end{equation}
where the inequality $(c)$ is from Fact~\ref{fact} and \ref{fact2}.
Similarly, we can derive 
$$\operatorname{Pr}\left( \theta_{\ell}(\tau) \leq\mu_{\ell} + \frac{\Delta_{\ell}}{2} \right) \geq 1- \frac{1}{4}\exp(-n_{\ell}(\tau)\Delta_{\ell}^2/32)-\exp(-n_1(\tau)\Delta_{\ell}^2/8)+ \frac{1}{4}\exp(-5n_{\ell}(\tau)\Delta_{\ell}^2/32).$$
Then we can derive $(b)$ using Fact~\ref{fact} and we have ensured each arm is pulled at least 2 times during the warm-up round $t_0$.

For every $S \subseteq [t -t_0]$, we have an upper bound value $\delta_{\max}$ for $\operatorname{Pr}\left(X_{\tau}=1|\mathcal{F}_{\tau}\right)$:
$
\delta_{\max} = (N-1) (\frac{1}{2}\exp(-\Delta_{\min}^2/16) + 2\exp(-\Delta_{\min}^2/4)
-\frac{1}{2}\exp(-5\Delta_{\min}^2/16)+ \exp \left( -\frac{(t_0-1)\Delta_{\min}^2}{2N}\right) +\exp(-\Delta_{\min}^2)).
$ Let $\delta = \frac{1}{t -t_0}\sum_{\tau = t_0 + 1}^{t}\delta_{\max} = \delta_{\max}$ and $\gamma = \frac{(N-1)}{N}$, we can derive the following bound from Fact~\ref{fact3}: 
\begin{equation}
    \operatorname{Pr}(n_1(t) < \frac{t - 1}{N}) = \operatorname{Pr}(\sum_{\tau = 0}^t X_{\tau} > t\frac{(N-1)}{N}) \leq \exp\left(-t D(\gamma \| \delta)\right).
\end{equation}

\end{proof}
\begin{lemma}\label{lm4}
After the warm-up round $t_0$, for any sub-optimal arm $k\neq1, \Delta_k = \mu_1 - \mu_k \geq 0$, the following inequality holds,
$$\sum_{t= t_0+1}^T \operatorname{Pr}\left(k=k^{\mathrm{emp}}(t),  n_{1}(t)\geq \frac{(t-1)}{N}  \right) \leq \frac{4N}{\Delta_{k}^2} $$
\end{lemma}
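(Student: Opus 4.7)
The plan is to convert the event $\{k=k^{\mathrm{emp}}(t),\, n_1(t)\geq (t-1)/N\}$ into a Hoeffding-style deviation event on the empirical means of arms $k$ and $1$, and then sum the resulting decaying exponential in $t$. First I would observe that being chosen as $k^{\mathrm{emp}}(t)$ forces $k\in\mathcal{S}^t$ by definition, so $n_k(t)\geq (t-1)/N$, and the hypothesis $n_1(t)\geq (t-1)/N$ simultaneously places the optimal arm in $\mathcal{S}^t$. Consequently the empirical-best rule implies $\hat{\mu}_k(t)\geq \hat{\mu}_1(t)$, and both arms have been played often enough for concentration to bite.

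Next I would rewrite $\hat{\mu}_k(t) - \hat{\mu}_1(t) \geq 0$ as $(\hat{\mu}_k(t)-\mu_k) + (\mu_1-\hat{\mu}_1(t)) \geq \Delta_k$, so that at least one of the two summands must exceed $\Delta_k/2$. A union bound together with Hoeffding's inequality (Fact~\ref{fact}), applied with rewards supported in $[0,1]$ and sample sizes at least $(t-1)/N$, then yields
\begin{equation*}
\operatorname{Pr}\!\left(k=k^{\mathrm{emp}}(t),\, n_1(t)\geq \tfrac{t-1}{N}\right) \leq 2\exp\!\left(-\tfrac{(t-1)\Delta_k^2}{2N}\right).
\end{equation*}

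Finally I would sum this bound over $t\in\{t_0+1,\ldots,T\}$ by an integral comparison,
\begin{equation*}
\sum_{t=t_0+1}^{T} 2\exp\!\left(-\tfrac{(t-1)\Delta_k^2}{2N}\right) \;\leq\; 2\int_{0}^{\infty}\exp\!\left(-\tfrac{s\,\Delta_k^2}{2N}\right)ds \;=\; \frac{4N}{\Delta_k^2},
\end{equation*}
which is precisely the claimed inequality. The main subtlety, and the step I expect to cost the most care, is that $n_k(t)$ is a random counter rather than a deterministic sample size, so Hoeffding's inequality cannot be invoked verbatim. The standard workaround is to condition on the filtration at the stopping time $\{n_k(t)=n\}$ and union-bound over $n\geq (t-1)/N$, or equivalently to appeal to a uniform-in-$n$ maximal deviation bound; either route preserves the exponent $(t-1)\Delta_k^2/(2N)$ up to a constant factor that is absorbed in the final $4N/\Delta_k^2$ bound.
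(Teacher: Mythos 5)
Your proposal is correct and follows essentially the same route as the paper: reduce the event to $\hat{\mu}_k(t)\geq\hat{\mu}_1(t)$ with both counters at least $(t-1)/N$, split into the two half-gap deviation events, apply Hoeffding to get $2\exp\left(-\frac{(t-1)\Delta_k^2}{2N}\right)$, and sum the tail (the paper uses a geometric series where you use an integral comparison; both give $4N/\Delta_k^2$). Your closing remark about the random counter $n_k(t)$ identifies a real subtlety that the paper silently ignores, so on that point you are, if anything, more careful than the original.
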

\begin{proof}
We bound the probability by :
 \begin{equation}
     \begin{aligned}
& =\sum_{t= t_0+1}^T \operatorname{Pr}\left(k=k^{\mathrm{emp}}(t), n_{1}(t)\geq \frac{(t-1)}{N} \right)  \\
& \stackrel{(d)}=\sum_{t=t_0+1}^T \operatorname{Pr}\left(k=k^{\mathrm{emp}}(t), n_{1}(t)\geq \frac{(t-1)}{N} , n_k(t) \geq \frac{(t-1)}{N}\right) \\
& \leq \sum_{t=t_0+1}^T \operatorname{Pr}\left(\hat{\mu}_k(t) \geq \hat{\mu}_{1}(t), n_k(t) \geq \frac{(t-1)}{N}, n_{1}(t)\geq \frac{(t-1)}{N} \right)  \\
&\leq \sum_{t=t_0+1}^T \operatorname{Pr}\left(\left((\hat{\mu}_{1}(t)\leq\mu_{1}-\frac{\Delta_{k}}{2}) \bigcup (\hat{\mu}_k(t)\geq \mu_{1}-\frac{\Delta_{k}}{2})\right),  n_k(t) \geq \frac{(t-1)}{N},  n_{1}(t)\geq \frac{(t-1)}{N}\right)\\
& =\sum_{t=t_0+1}^T \operatorname{Pr}\left(\left((\hat{\mu}_{1}(t)\leq \mu_{1}-\frac{\Delta_{k}}{2}) \bigcup (\hat{\mu}_k(t)\geq\mu_k+\frac{\Delta_{k}}{2})\right), n_k(t) \geq \frac{(t-1)}{N}, n_{1}(t)\geq \frac{(t-1)}{N} \right)\\
& \stackrel{(e)}\leq \sum_{t=t_0+1}^T \operatorname{Pr}\left(\hat{\mu}_{1}(t)-\mu_{1}\leq-\frac{\Delta_{k}}{2}, n_{1}(t)\geq \frac{(t-1)}{N}\right) +\sum_{t=t_0+1}^T \operatorname{Pr}\left(\hat{\mu}_k(t)-\mu_k\geq\frac{\Delta_{k}}{2}, n_k(t) \geq \frac{(t-1)}{N}\right)\\
& \stackrel{(f)}\leq \sum_{t=t_0+1}^T  2 \exp \left(\frac{-(t-1) \Delta_{k}^2}{2 N}\right) \stackrel{(g)}\leq \frac{4N}{\Delta_{k}^2},
\end{aligned}
 \end{equation}
 Here, $(d)$ holds because of the truth that the empirical best arm is $k^{emp}(t)$ selected from the set $\mathcal{S}_t = \{k\in[N]:n_k(t) \geq \frac{(t-1)}{N}\}$. Inequality $(e)$ follows the union bound. We have $(f)$ from the truth that $\hat{\mu}_k(t) = \frac{\sum_{\tau = 1}^t r_k(\tau)\mathbbm{1}(k(\tau) = k)}{n_k(t)}, \forall k\in[N]$ and Fact~\ref{fact}. The last inequality $(g)$ uses the fact that $\frac{\Delta_k^2}{2N}>0$ and the geometric series. 
\end{proof}

\textbf{Proof of Lemma 1.}
Now, we prove Lemma 1 in the main paper.
\begin{proof}
During $t_0$ warm-up rounds, the maximum pulling times of a non-competitive arm $k^{nc}$ are bound in $t_0$. We then analyze the expected number of times pulling $k^{nc}$ after round $t_0$.
 \begin{equation}
\begin{aligned}
&\sum_{t= t_0+1}^T \operatorname{Pr}(k(t) = k^{nc})\\
&= \sum_{t= t_0+1}^T \operatorname{Pr}(k(t) = k^{nc}, n_{1}(t) \geq \frac{(t-1)}{N}) + \sum_{t= t_0+1}^T \operatorname{Pr}(k(t) = k^{nc}, n_{1}(t) <\frac{(t-1)}{N})\\
&\stackrel{(h)}\leq \sum_{t=t_0+1}^T\operatorname{Pr} (  k(t) = k^{nc}, k^{nc} = k^{emp}(t), n_{1}(t)\geq \frac{(t-1)}{N} ) \\
&+ \sum_{t=t_0+1}^T \operatorname{Pr} \left( k(t) = k^{nc}, k^{nc}\in \mathcal{S}_t \setminus k^{emp}(t), n_{1}(t)\geq \frac{(t-1)}{N} \right)+ \sum_{t= t_0+1}^T \operatorname{Pr}( n_{1}(t) <\frac{(t-1)}{N}) \\ 
&\stackrel{(i)}\leq \sum_{t=t_0+1}^T\operatorname{Pr} \left( \hat{\mu}_{1}(t) \leq \hat{\varphi}_{k^{nc}}(t), k(t) = k^{nc}, n_{1}(t)\geq \frac{(t-1)}{N} \right) +\frac{4N}{\Delta_{k^{nc}}^2} + \sum_{t= t_0+1}^T  \exp\left(-t D(\gamma \| \delta)\right)\\
&\leq \sum_{t=t_0+1}^T\operatorname{Pr} \left( \left((\hat{\mu}_{1}(t) \leq \mu_{1} + \frac{\Tilde{\Delta}_{k^{nc},1}}{2} )\bigcup (\hat{\varphi}_{k^{nc}}(t) \geq  \mu_{1} + \frac{\Tilde{\Delta}_{k^{nc},1}}{2})\right) , k(t) = k^{nc}, n_{1}(t)\geq \frac{(t-1)}{N} \right) \\
&+ \sum_{t= t_0+1}^T  \exp\left(-t D(\gamma \| \delta)\right) +\frac{4N}{\Delta_{k^{nc}}^2}  \\
&\stackrel{(j)}\leq \sum_{t=t_0+1}^T\operatorname{Pr} \left( \hat{\mu}_{1}(t) \leq \mu_{1} + \frac{\Tilde{\Delta}_{k^{nc},1}}{2}  n_{1}(t)\geq \frac{(t-1)}{N} \right)\\
&+ \sum_{t=t_0+1}^T\operatorname{Pr} \left( \hat{\varphi}_{k^{nc}}(t) \geq  \frac{\sum_{t=1}^T\mathbb{E}[r^{max}_{k^{nc}}(t)]}{T} -\frac{\Tilde{\Delta}_{k^{nc},1}}{2} , k(t) = k^{nc} \right) + \sum_{t= t_0+1}^T  \exp\left(-t D(\gamma \| \delta)\right)+\frac{4N}{\Delta_{k^{nc}}^2} \\
&\stackrel{(k)}\leq \sum_{t= t_0+1}^T \exp\left(\frac{-(t-1)\Tilde{\Delta}_{k^{nc},1}^2}{2N}\right)
+\sum_{j = 1}^T \operatorname{Pr} \left( \hat{\varphi}_{k^{nc}}(\tau_j) - \frac{\sum_{t=1}^T\mathbb{E}[r^{max}_{k^{nc}}(t)]}{T} \geq -\frac{\Tilde{\Delta}_{k^{nc},1}}{2} \right)\\
&+ \sum_{t= t_0+1}^T  \exp\left(-t D(\gamma \| \delta)\right)+ \frac{4N}{\Delta_{k^{nc}}^2} \\
&\stackrel{(l)}\leq \sum_{t= t_0+1}^T \exp\left(\frac{-(t-1)\Tilde{\Delta}_{k^{nc},1}^2}{2N}\right)
+\sum_{j=1}^T \exp{\left(-\frac{j\Tilde{\Delta}_{k^{nc},1}^2}{2}\right)+ \sum_{t= t_0+1}^T  \exp\left(-t D(\gamma \| \delta)\right)}+ \frac{4N}{\Delta_{k^{nc}}^2} \\ 
&\stackrel{(m)}\leq \frac{2N}{\Tilde{\Delta}_{k^{nc},1}^2}+ \frac{2}{\Tilde{\Delta}_{k^{nc},1}^2}+\frac{1}{D(\gamma \| \delta)}+\frac{4N}{\Delta_{k^{nc}}^2} = \mathcal{O}(1), 
        \end{aligned}
    \end{equation}
Here, both $(h)$ and $(j)$ are derived using the union bound.  We have $(i)$ from the Lemma~\ref{lm4} and Lemma~\ref{lm3}. The inequality $(k)$ is obtained from Fact~\ref{fact}, wherein $j$ in $(k)$ explicitly denotes the round index when arm $k^{sub}$ is pulled. Inequality $(l)$ stems from Fact~\ref{fact}. We have $(m)$ because of the truth that $\frac{\Tilde{\Delta}_{k^{nc},1}^2}{2N}\geq 0, \frac{\Tilde{\Delta}_{k^{nc},1}^2}{2}\geq 0$, and $D(\gamma \| \delta) \geq 0$. We also use geometric series in $(m)$.

\end{proof}
We provide another Lemma to facilitate the proof of Lemma 2 in the main paper.
\begin{lemma}\label{lm5} The following inequality holds,
$$
\operatorname{Pr}\left(E_2(t)\right)\leq 4(N-1)t\exp\left( -\frac{(t-1)\Delta_{min}^2}{2N}\right)+ \frac{(N-1)}{D(\gamma \| \delta)},
$$
where $\Delta_{\min} = \min_k \Delta_k$.
\end{lemma}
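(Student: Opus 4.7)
The plan is to bound $\operatorname{Pr}(E_2(t))$ by decomposing the event that arm $1$ is excluded from the empirical competitive set. By construction, $E_2(t)$ requires $\hat{\varphi}_1(t) < \hat{\mu}_{k^{emp}(t)}(t)$; but since $r_1^{\max}(\tau) \geq r_1(\tau)$ holds pointwise, we have $\hat{\varphi}_1(t) \geq \hat{\mu}_1(t)$ deterministically, so the empirical best arm must be some suboptimal $k \neq 1$ with $k \in \mathcal{S}_t$ (so $n_k(t) \geq (t-1)/N$). A union bound over the $N-1$ suboptimal arms thus gives
\[
\operatorname{Pr}(E_2(t)) \leq \sum_{k\neq 1}\operatorname{Pr}\bigl(k^{emp}(t)=k,\ \hat{\varphi}_1(t) < \hat{\mu}_k(t)\bigr).
\]

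For each suboptimal $k$, I would further split on whether arm $1$ has been pulled enough, i.e.\ $n_1(t) \geq (t-1)/N$, versus its complement (which is precisely event $E_1(t)$). The low-pull case is controlled by Lemma~\ref{lm3}, which yields $\exp(-t\,D(\gamma\|\delta))$. Summing this across the $N-1$ suboptimal arms and using the elementary inequality $e^{-x} \leq 1/x$ for $x>0$ produces the $\frac{N-1}{D(\gamma\|\delta)}$ term in the final bound.

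For the high-pull case ($n_1(t), n_k(t) \geq (t-1)/N$), I would insert the midpoint $\mu_1 - \Delta_k/2 = \mu_k + \Delta_k/2$ and reduce the event via a union bound:
\[
\{\hat{\varphi}_1(t) < \hat{\mu}_k(t)\} \subseteq \{\hat{\varphi}_1(t) < \mu_1 - \Delta_k/2\} \cup \{\hat{\mu}_k(t) \geq \mu_k + \Delta_k/2\}.
\]
The right-hand event is concentration of an i.i.d.\ sample mean, and Hoeffding's inequality (Fact~\ref{fact}) directly yields $2\exp\bigl(-(t-1)\Delta_k^2/(2N)\bigr)$, which I would then upper bound uniformly by replacing $\Delta_k$ with $\Delta_{\min}$.

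The main obstacle is the left-hand event, since the summands $r_1^{\max}(\tau)$ defining $\hat{\varphi}_1(t)$ are \emph{not} i.i.d.\ --- they are running maxima, hence non-decreasing and positively correlated across $\tau$. My plan is to exploit $\mathbb{E}[r_1^{\max}(\tau)] \geq \mu_1$ for every $\tau$ (a running maximum dominates any single draw in expectation), so that $\mathbb{E}[\hat{\varphi}_1(t)] \geq \mu_1$ uniformly, and then obtain concentration by conditioning on the realized value of $n_1(t)$ and union bounding over its possible values in $\{\lceil (t-1)/N\rceil,\ldots,t\}$, which contributes the factor of $t$ in the final bound. For each fixed pull count the summands remain bounded in $[0,1]$, so a one-sided Hoeffding-style inequality still yields $2\exp\bigl(-(t-1)\Delta_k^2/(2N)\bigr)$. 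Combining the two Hoeffding terms gives a factor $4$, summing over $N-1$ suboptimal arms produces $(N-1)$, and the union bound over $n_1(t)$ supplies the factor $t$, resulting in $4(N-1)t\exp\bigl(-(t-1)\Delta_{\min}^2/(2N)\bigr)$; added to the $(N-1)/D(\gamma\|\delta)$ contribution this yields the stated inequality.
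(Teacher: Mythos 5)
Your proposal follows essentially the same route as the paper's proof: a union bound over the $N-1$ suboptimal arms, a split on whether $n_1(t)\geq (t-1)/N$ (with the low-pull case handled by Lemma~\ref{lm3} and converted to the $\tfrac{N-1}{D(\gamma\|\delta)}$ form), and a midpoint-plus-Hoeffding argument for the high-pull case that uses $\mathbb{E}[r_1^{\max}(\tau)]\geq\mu_1$. One small remark: the concentration of $\hat{\varphi}_1(t)$ is cleanest if you invoke your own opening observation that $\hat{\varphi}_1(t)\geq\hat{\mu}_1(t)$ pointwise and apply Hoeffding to $\hat{\mu}_1(t)$ instead --- the running maxima are dependent even for a fixed pull count, so a direct Hoeffding step is not rigorous, though the paper's proof glosses over this in exactly the same way.
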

\begin{proof}
\begin{equation}
    \begin{aligned}
\operatorname{Pr}\left(E_2(t)\right) &\stackrel{(n)}\leq \sum_{\ell \in[N]\setminus 1}\operatorname{Pr}\left(\hat{\varphi}_1(t) < \hat{\mu}_{\ell}(t), n_{1}(t)\geq \frac{(t-1)}{N}, n_{\ell}(t)\geq \frac{(t-1)}{N}\right)\\
&+ \sum_{\ell \in[N]\setminus 1}\operatorname{Pr}\left(\hat{\varphi}_1(t) < \hat{\mu}_{\ell}(t), n_{1}(t)< \frac{(t-1)}{N}, n_{\ell}(t)\geq \frac{(t-1)}{N}\right)\\
        &+ \sum_{\ell \in[N]\setminus 1}\operatorname{Pr}\left(\hat{\mu}_1(t)< \hat{\mu}_{\ell}(t),n_1(t)\geq \frac{(t-1)}{N}, n_{\ell}(t)\geq \frac{(t-1)}{N}\right)\\
        &\leq \sum_{\ell \in[N]\setminus 1}\operatorname{Pr}(\left((\hat{\varphi}_1(t)< \mu_1 - \frac{\Delta_{min}}{2}) \bigcup (\hat{\mu}_{\ell}(t) >\mu_1 - \frac{\Delta_{min}}{2})\right),\\
        &n_1(t)\geq \frac{(t-1)}{N}, n_{\ell}(t)\geq \frac{(t-1)}{N})\\
        &+ \sum_{\ell \in[N]\setminus 1}\operatorname{Pr}(\left((\hat{\mu}_1(t)< \mu_1 - \frac{\Delta_{min}}{2}) \bigcup (\hat{\mu}_{\ell}(t) >\mu_1 - \frac{\Delta_{min}}{2})\right),\\
        &n_1(t)\geq \frac{(t-1)}{N}, n_{\ell}(t)\geq \frac{(t-1)}{N}) + \sum_{\ell \in[N]\setminus 1}\operatorname{Pr}\left(n_{1}(t)< \frac{(t-1)}{N}\right)\\
        &\stackrel{(o)}\leq \sum_{\ell \in[N]\setminus 1}\operatorname{Pr}\left((\hat{\varphi}_1(t)< \frac{\sum_{t=1}^T\mathbb{E}[r^{\max}_1(t)]}{T} - \frac{\Delta_{min}}{2},n_1(t)\geq \frac{(t-1)}{N} \right)\\
        &+ \sum_{\ell \in[N]\setminus 1}\operatorname{Pr}\left((\hat{\mu}_1(t)< \mu_1 - \frac{\Delta_{min}}{2},n_1(t)\geq \frac{(t-1)}{N} \right)\\
        &+ 2\sum_{\ell \in[N]\setminus 1}\operatorname{Pr}\left(\hat{\mu}_{\ell}(t) >\mu_{\ell} + \frac{\Delta_{min}}{2}, n_{\ell}(t)\geq \frac{(t-1)}{N}\right) + (N-1)\exp\left(-t D(\gamma \| \delta)\right) \\
        &\stackrel{(p)}\leq 4(N-1)\exp\left( -\frac{(t-1)\Delta_{min}^2}{2N}\right)+ (N-1)\exp\left(-t D(\gamma \| \delta)\right),
    \end{aligned}
\end{equation}
Inequality $(n)$, using union bound, arises from the observation that when arm 1 is absent from the empirical competitive set $\mathcal{E}^t$ at round $t$, it is either not selected as the empirical best arm $k^{emp}(t)$ or its $\hat{\varphi}_1(t)$ is less than the estimated mean of the empirical best arm $\hat{\mu}_{k^{emp}(t)}(t)$. The validity of inequality $(n)$ relies on the fact that $\frac{\sum_{t=1}^T\mathbb{E}[r^{\max}_1(t)]}{T} \geq \mu_1$ and Lemma~\ref{lm3}. We establish the final inequality $(p)$ by leveraging Fact~\ref{fact}.
\end{proof}
\textbf{Proof of Lemma 2.} Now, we present proof details of Lemma 2 in the main paper.

\begin{proof}
We split the analysis of $\sum_{t=1}^T \operatorname{Pr}(k(t) = k^{sub})$ into three parts: the pulls in the warm round; the pulls when the event $E_2(t)$ happens after the warm round; the pulls when the complementary of $E_2(t)$ happens. We summarize it as follows:
    \begin{equation}
    \begin{aligned}
        \sum_{t= 1}^T \operatorname{Pr}\left(k(t) = k^{sub}\right) &= \sum_{t=1}^{t_0} \operatorname{Pr}\left(k(t) = k^{sub}\right) + \sum_{t=t_0+1}^{T} \operatorname{Pr}\left(k(t) = k^{sub}, E_2(t)\right) \\
        &+ \sum_{t=t_0+1}^{T} \operatorname{Pr}\left(k(t) = k^{sub}, E^c_2(t)\right)\\
        &\leq \sum_{t=1}^{T} \operatorname{Pr}\left(k(t) = k^{sub}\right) + \sum_{t=t_0+1}^{T} \operatorname{Pr}\left(E_2(t)\right)
    \end{aligned}
    \end{equation}
    When event $E_2(t)$ does not happen, the analysis of the upper bound of pulling the competitive but sub-optimal arm aligns to plain TS. We apply the result from \cite{agrawal2012analysis}, which bounds the number of times a sub-optimal arm $k \neq 1$ is pulled within $\mathcal{O}(\log(T))$.
    In Lemma~\ref{lm5}, when $E_2(t)$ happens, 
    we derive the following bound:
    \begin{equation}
    \begin{aligned}
        \sum_{t=t_0+1}^{T} \operatorname{Pr}(E_2(t)) &\leq \sum_{t=t_0+1}^T \left( 4(N-1)\exp\left( -\frac{(t-1)\Delta_{min}^2}{2N}\right)+  (N-1)\exp\left(-t D(\gamma \| \delta)\right)\right)\\
        & \leq \frac{8N(N-1)}{\Delta_{\min}^2} + \frac{1}{D(\gamma \| \delta)}\\
        &= \mathcal{O}(1). 
    \end{aligned}
    \end{equation}
    The proof is completed.
\end{proof}
\textbf{Proof the Theorem 1.}
\begin{proof}
We revisit the definition of expected regret, given by:
$$\mathbb{E} [R(T)] = \mathbb{E} \left[\sum_{t=1}^T (\mu_1 - \mu_{k(t)})\right] = \mathbb{E} \left[\sum_{k=1}^N n_k(T)\Delta_k\right].$$
Considering $D$ competitive arms and $(N-D)$ non-competitive arms, the regret of E-TS in T rounds is bounded by:
\begin{equation}
\begin{aligned}
   \mathbb{E} [R(T)] &= \sum_{k^{nc}\in[N-D]}\mathbb{E}[n_{k^{nc}}(T)]\Delta_{k^{nc}} + \sum_{k^{sub}\in[D]}\mathbb{E}[n_{k^{sub}}(T)]\Delta_{k^{sub}} \\
   &\stackrel{(1)}\leq \sum_{k^{nc}\in[N-D]}\Delta_{k^{nc}}\mathcal{O}(1) + \sum_{k^{sub}\in[D]}\Delta_{k^{sub}}\mathcal{O}(\log(T))\\
   &\leq (N-D)\mathcal{O}(1) + D\mathcal{O}(\log(T)),
\end{aligned}
\end{equation}
 where the inequality $(1)$ is from Lemma~\ref{lmsub} and Lemma~\ref{lmnc}. Thus the proof is finalized.
\end{proof}

\section{Supplementary experiments and experimental details}

\subsection{Dataset and model structure}\label{sec:datset}
Table~\ref{data} provides essential information about each dataset used in our study. We will introduce more details regarding the dataset characteristics and the corresponding model structures.

The \textbf{Credit} dataset consists of information regarding default payments, demographic characteristics, credit data, payment history, and credit card bill statements from clients in Taiwan. The dataset is partitioned evenly across six clients, each managing a bottom model with a Linear-BatchNorm-ReLU structure. The server hosts the top model, comprising of two Linear-ReLU-BatchNorm layers followed by a WeightNorm-Linear-Sigmoid layer. 

The \textbf{Real-sim} dataset is from LIBSVM, which is a library for support vector machines (SVMs). 10 clients equally hold the data features and compute embeddings through a bottom model with 2 Linear-ReLU-BatchNorm layers. The server controls the top model with 3 Linear-ReLU layers.

The \textbf{FashionMNIST} dataset consists of $28\times28$ grayscale images of clothing items. The dataset is equitably distributed across 7 clients, with each holding a data portion of $28\times4$ dimensions. On the client side, it holds a Linear-BatchNorm-ReLU bottom model. On the server side, the top model comprises  eight groups of Conv-BatchNorm-ReLU structures, two MaxPool layers, two Linear-Dropout-ReLU layers, and a final Linear output layer.

 The \textbf{CIFAR-10} dataset contains 60,000 color images of size $32\times32$, representing vehicles and animals. We divide each image into $4\times 32$ sub-images and distribute them among 8 clients. Each client's bottom model consists of 2 convolutional layers and 1 max-pooling layer. The server's top model is built with 6 convolutional layers and 3 linear layers.

The \textbf{Caltech-7} dataset, a subset of seven classes from the Caltech-101 object recognition collection, is distributed across six clients. Each client is assigned one unique feature view, encompassing the Gabor feature, Wavelet moments (WM), CENTRIST feature, Histogram of Oriented Gradients (HOG) feature, GIST feature, and Local Binary Patterns (LBP) feature, respectively. Every client maintains a bottom model utilizing a Linear-BatchNorm-ReLU structure. At the server level, the top model comprises eight Linear-ReLU layers, two Dropout layers, and a final Linear output layer.

The \textbf{IMDB} dataset comprises 50,000 highly polarized movie reviews, each categorized as either positive or negative. For distributed processing across 6 clients, each review is divided into several sentences, and an equal number of these sentences are allocated to each client. Each client utilizes a Bert model without fine-tuning—at the bottom level to obtain an embedding with 512 dimensions. These embeddings are then input to the server's top model, which consists of two Linear-ReLU layers followed by a final Linear output layer.

\begin{table*}[ht]
\caption{VFL dataset and parameters descriptions.}
\label{data}
\vskip 0.15in
\scriptsize
   \centering
   \resizebox{0.8\columnwidth}{!}{
    \begin{tabular}{c c c c c c c}
    \toprule
     Task & \multicolumn{2}{c}{Tabular} & \multicolumn{2}{c}{CV} &  {Multi-view} & {NLP}\\
     \hline
         Dataset name &  Credit& Real-sim &  FashionMNIST & CIFAR10& Caltech-7 & IMDB \\
       \hline 
         Number of samples & 30,000& 72,309 &70,000 & 60,000 &1474 & 50,000  \\
         \hline
         Feature size  &  23 & 20,958
 & 784 & 1024 &3766 & -  \\
         \hline
         Number of classes & 2& 2 & 10 & 10&7 & 2  \\
         \hline
         Number of clients & 7 & 10& 7& 8 & 6 & 6\\
         \hline
         Batchsize $B^t$ & 32 & 512 & 128 & 32 & 16 & 64\\
         \hline
         Warm-up rounds $t_0$ & 50& 50 & 80 & 80 & 80 & 40\\
         \hline   
    \end{tabular}}
    \label{tab:my_label}
\end{table*}

\subsection{Experimental result in ablation study }\label{sec:ext_result}
Additional experiments have been conducted across a variety of datasets under diverse corruption constraints, as illustrated in Figure~\ref{constraintdata}.

\makeatletter
\setlength{\@fptop}{5pt}
\makeatother
\begin{figure*}[ht!]
\minipage{0.48\textwidth}
  \centering
   {\quad \footnotesize{FashionMNIST}}
\endminipage\hfill
\minipage{0.48\textwidth}%
  \centering
 {\quad \footnotesize{Caltech-7}}
\endminipage\hfill
\vspace{1mm}
\minipage{0.249\textwidth}
\centering\includegraphics[width=\columnwidth]{figure/Portion_FMNIST_TC_0.3.pdf}
  \subfigure{\scriptsize{\quad (a) Targeted ($\beta=0.3$)}}%
\endminipage\hfill
\minipage{0.249\textwidth}
  \centering
  \includegraphics[width=\columnwidth]{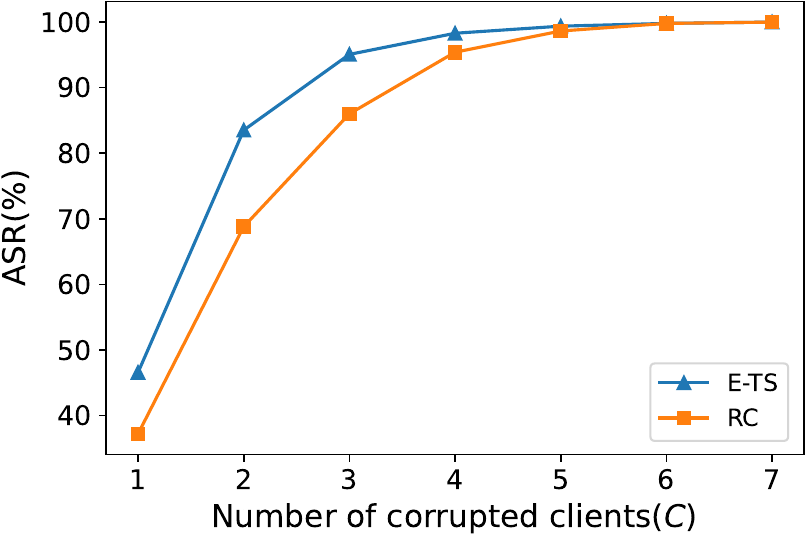}
  \subfigure{\scriptsize{\quad (b) Untargeted ($\beta=0.1$)}}
\endminipage\hfill
\minipage{0.249\textwidth}%
  \centering
  \includegraphics[width=\linewidth]{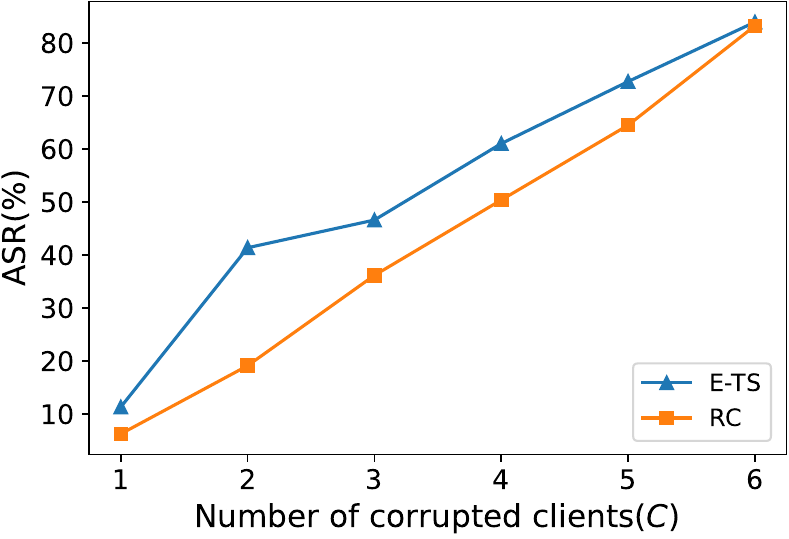}
  \subfigure{\scriptsize{\quad (c) Targeted ($\beta=0.3$)}}
  \endminipage
\minipage{0.249\textwidth}%
  \centering
  \includegraphics[width=\linewidth]{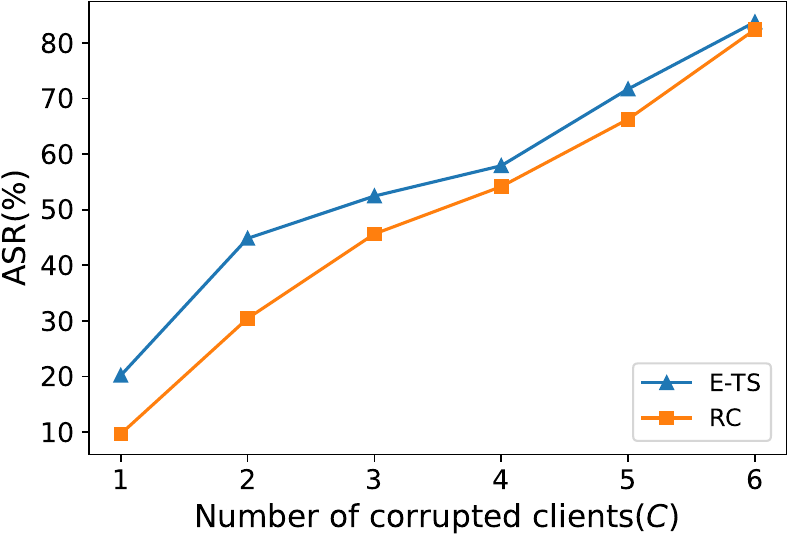}
  \subfigure{\scriptsize{\quad (d) Untargeted ($\beta=0.3$)}}
  \endminipage
\quad \\

\vspace{-1mm}

\minipage{0.03\textwidth}
\textbf{\quad}
\endminipage\hfill
\minipage{0.44\textwidth}
  \centering
   {\quad \footnotesize{Credit}}
\endminipage\hfill
\minipage{0.44\textwidth}%
  \centering
 {\quad \footnotesize{IMDB}}
\endminipage\hfill
\minipage{0.03\textwidth}
\textbf{\quad}
\endminipage\hfill

\vspace{0mm}

\minipage{0.10\textwidth}
\text{\quad}
\endminipage\hfill
\minipage{0.33\textwidth}
\centering\includegraphics[width=0.8\columnwidth]{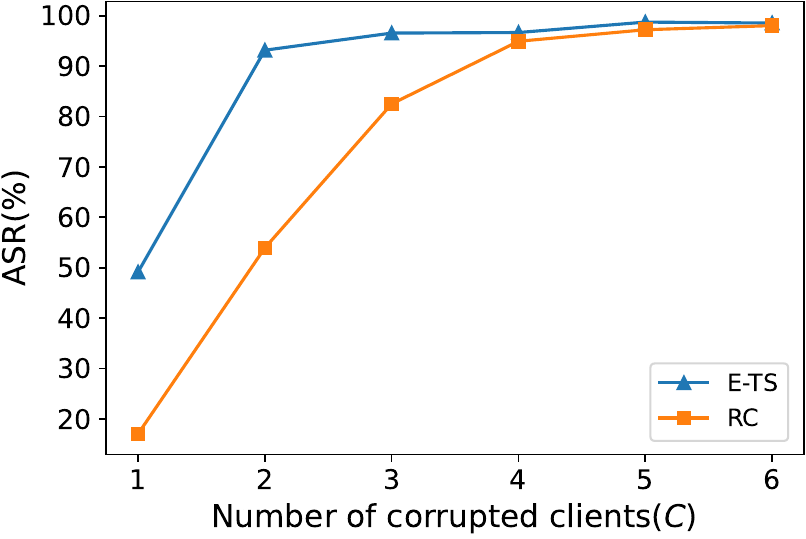}
  \subfigure{\scriptsize{\quad (e) $\beta=0.3$}}%
\endminipage\hfill
\minipage{0.10\textwidth}
\textbf{\quad}
\endminipage\hfill
\minipage{0.33\textwidth}
  \centering
  \includegraphics[width=0.8\columnwidth]{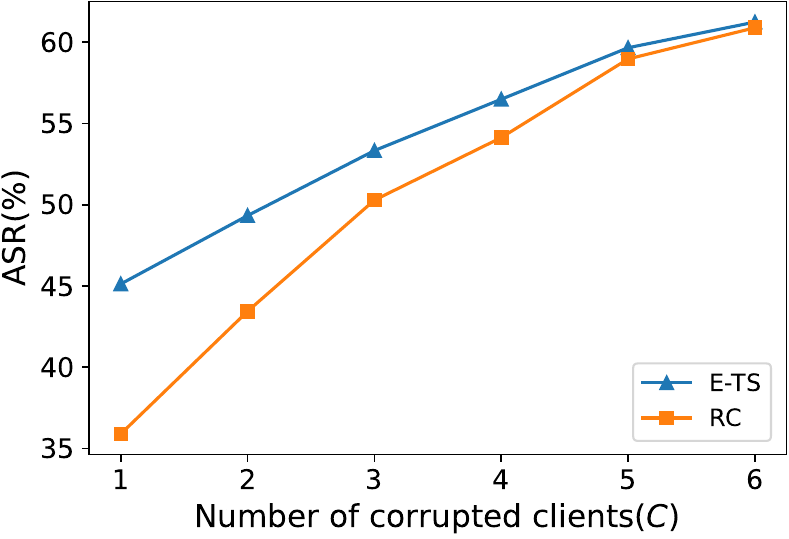}
  \subfigure{\scriptsize{\quad (f) $\beta=0.05$}}%
\endminipage\hfill
\minipage{0.10\textwidth}
\text{\quad}
\endminipage\hfill

\vspace{-1mm}
\caption{ASR using different number of corrupted clients.}
\label{constraintdata}
\end{figure*}

\subsection{Dynamics of arm Selection and empirical competitive set in TS and E-TS}\label{dynamics}

We investigated the arm selection behavior of TS and E-TS during a targeted attack on FashionMNIST, as shown in Figure~\ref{fig:dyn}. This study also tracked the variation in the size of E-TS's empirical competitive set, depicted in Figure~\ref{fig:dyn}. The parameters for this analysis were consistent with those in the FashionMNIST targeted attack scenario (Figure~\ref{attack}): $t_0 = 80$, $C = 2$, $\beta = 0.15$, $Q = 2000$ and the number of arms $N = \binom{7}{2} = 21$. We list all arms as follow:

[0: (client 1, client 2), 1: (client 1, client 3), 2: (client 1, client 4), 3: (client 1, client 5), 4:(client 1, client 6), 5: (client 1, client 7), 6: (client 2, client 3), 7: (client 2, client 4), 8: (client 2, client 5), 9: (client 2, client 6), 10: (client 2, client 7), 11: (client 3, client 4), 12: (client 3, client 5), 13: (client 3, client 6), 14: (client 3, client 7), 15: (client 4, client 5), 16: (client 4, client 6), 17: (client 4, client 7), 18: (client 5, client 6), 19: (client 5, client 7), 20: (client 6, client 7)].

Analysis of Figure~\ref{fig:dyn6a} reveals that initially, E-TS selected a suboptimal arm. However, after 140 rounds, it consistently chose arm 5 (representing the pair of client 1 and client 7), indicating a stable selection. In contrast, TS continued to explore different arms during this period. Figure~\ref{fig:dyn6b} shows that the empirical competitive set in E-TS reduced to a single arm within the first 40 rounds. Initially, the competitive arm selected by E-TS was not optimal. Nevertheless, E-TS effectively narrowed down its focus to this suboptimal arm, eventually dismissing it as non-competitive and identifying the best arm for selection.

\begin{figure}[ht]
    \centering
    
    \subfigure[The choice of arm in E-TS and TS.]{
        \includegraphics[width=0.45\textwidth]{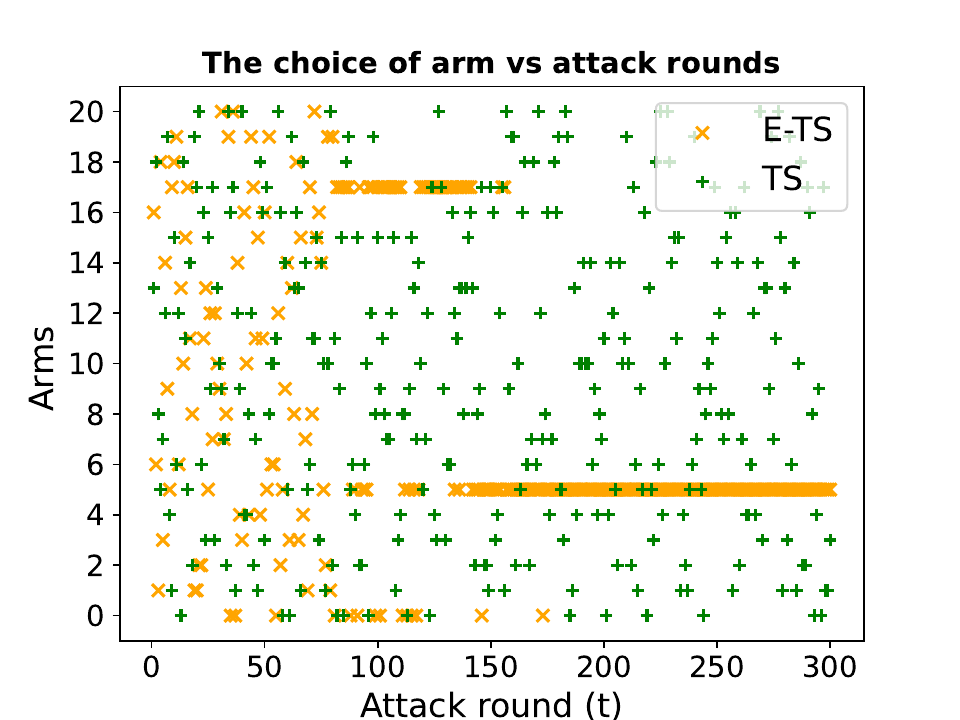}
        \label{fig:dyn6a}
    }
    \subfigure[The change of the empirical competitive set's size.]{
        \includegraphics[width=0.45\textwidth]{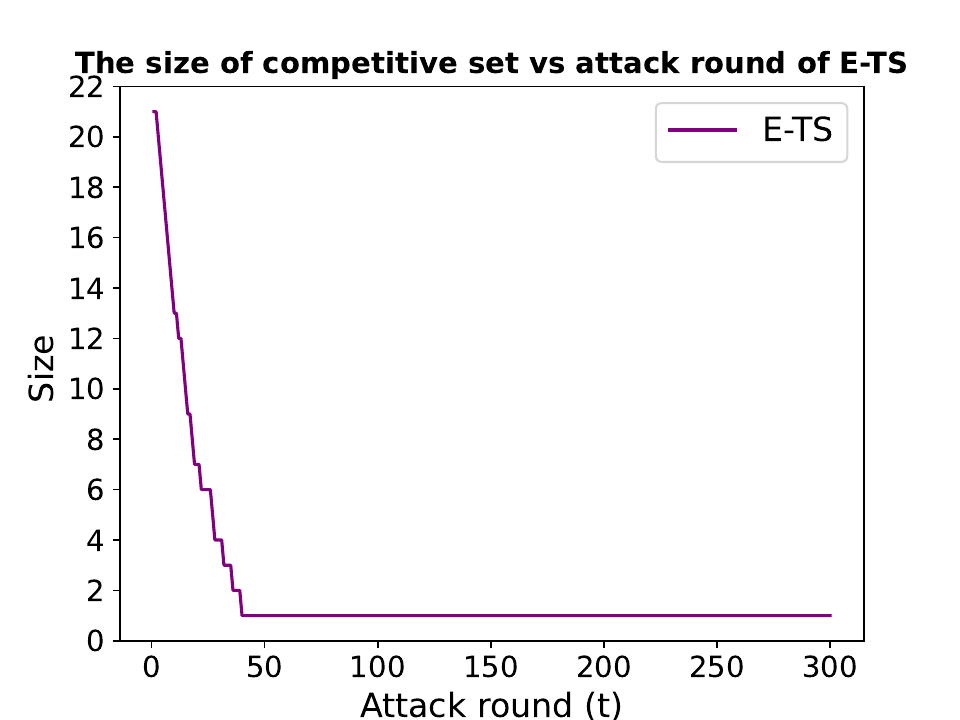}
        \label{fig:dyn6b}
    }
    
    \caption{Dynamics of arm selection and competitive set in E-TS and TS.}
    \label{fig:dyn}
\end{figure}

\subsection{Minimum query budget and corruption channels to achieve 50\% ASR}\label{minimum}

To explore how the necessary number of queries and corrupted channels vary across different models, datasets, and systems, we conducted experiments using Credit and Real-sim datasets. We specifically analyzed the average number of queries $q$ required to attain a 50\% ASR under various levels of client corruption (corruption constraint $C$).
For this analysis, we applied the proposed attack on both the Credit and Real-sim datasets in a 7-client setting. We varied $C$ from 1 to 7 and recorded the average queries $q$ needed for attacking over 50\% of the samples successfully. 
In addition to assessing the impact of different datasets, we investigated the influence of model complexity by attacking two deeper Real-sim models contrasting it with the standard 3-layer server model. Specifically, the standard 3-layer model Real-sim(standard) has a Dropout layer after the first layer of the server model and achieves 96.1\% test accuracy.
One deeper server model Real-sim(deep) added an extra three layers to the Real-sim(standard) after the Dropout layer of the server model. Another model Real-sim(dropout) structure is the same as Real-sim(deep) except that it added another Dropout layer before the penultimate layer of the server model. Both Real-sim(deep) and Real-sim(dropout) have 97\% test accuracy.
Furthermore, to analyze the system's effect on $q$ and $C$, we conducted experiments on Real-sim in a 10-client scenario, varying $C$ from 1 to 10 and recording $q$. Throughout these experiments, we maintained $ \beta = 0.8 $ and $ t_0 = 2N$, where $ N $ denotes the number of arms. The results are presented in Figure~\ref{fig:queryc}.

\begin{figure}[ht]
    \centering
    
    \subfigure[7 clients.]{
        \includegraphics[width=0.45\textwidth]{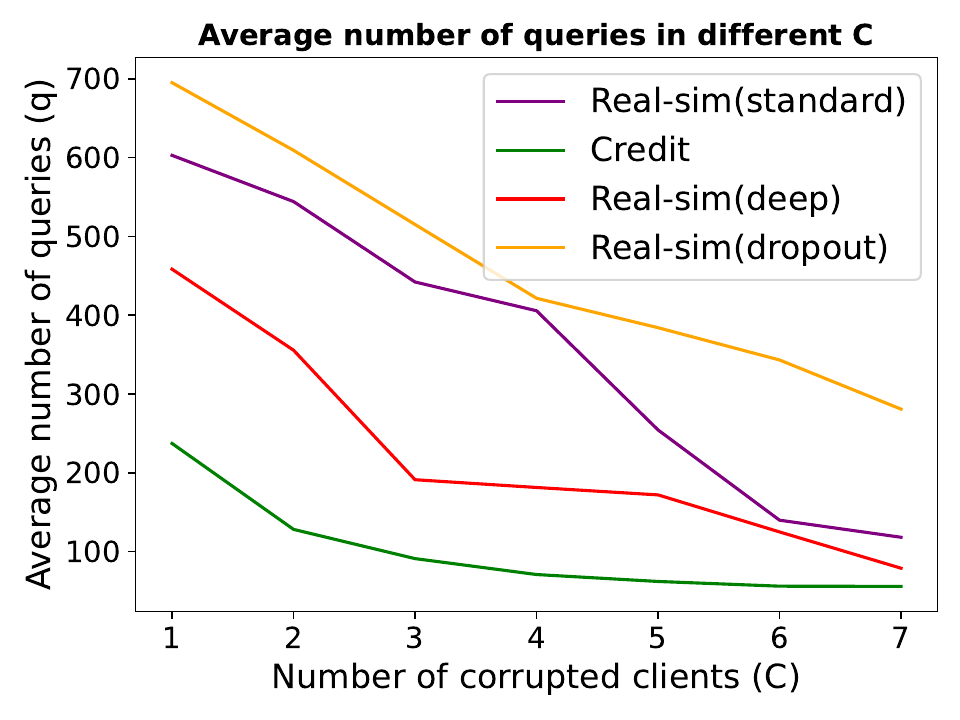}
        \label{fig:7a}
    }
    \hfill
    \subfigure[10 clients. ]{
        \includegraphics[width=0.45\textwidth]{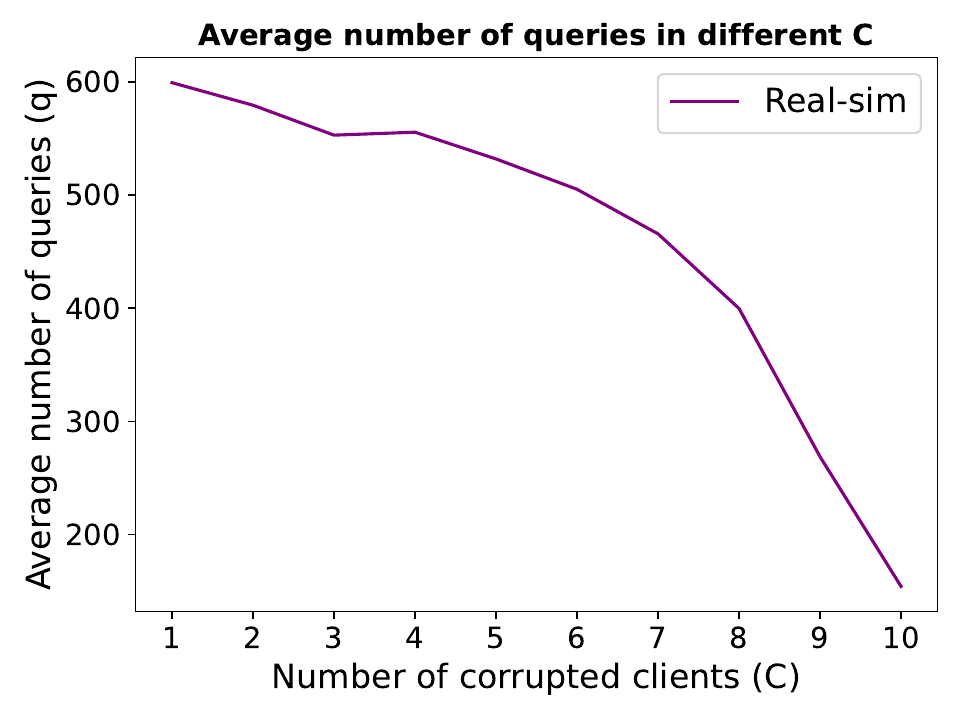}
        \label{fig:7b}
    }
    
    \caption{Average number of queries in different corruption constraint to achieve 50\% ASR.}
    \label{fig:queryc}
\end{figure}

From Figure~\ref{fig:queryc}, we observe that the required average number of queries decreases with a looser (or higher) corruption constraint $C$. The comparison of Real-sim and Credit (Figure~\ref{fig:7a}) reveals that simpler datasets in the same task category (both being tabular datasets) necessitate fewer queries. 

Contrary to our initial assumption, a deeper model does not necessarily require more queries. The results for Real-sim(standard), Real-sim(deep), and Real-sim(norm) from Figure~\ref{fig:7a} suggest that attacking a Real-sim(deep) requires fewer queries. A deeper model with an extra Dropout layer can make the model more robust and needs more quires to achieve 50\% ASR. The reason for that is the deeper model will learn a different hidden feature of the sample, thus making the model have different robustness compared to the shallow one. Dropout can enhance robustness by preventing the model from becoming overly reliant on any single feature or input node, encouraging the model to learn more robust and redundant representations.

Comparing Figure~\ref{fig:queryc} (a) and (b), we deduce that systems with more clients demand a greater number of queries to achieve the same ASR at a given $C$, due to each client possessing fewer features.

In conclusion, to attain a target ASR with the same $C$, simpler datasets within the same task require fewer queries. Systems with a higher number of clients necessitate more queries. However, the influence of the model's complexity does not simply depend on the scales of model parameters but is affected more by the Dropout layer.

\subsection{Discussion on the large exploration spaces}\label{large}

We extend the experiments in Figure~\ref{fg3} to larger exploration spaces, i.e. set the corruption constraint $C = 7$ and $C = 8$, which results in $\binom{16}{7} = 11,440,\binom{16}{8} = 12,870$ arms, respectively. However, constrained by the computation power and limited time in the rebuttal period, we compare E-TS and plain TS in large exploration spaces through numerical simulation where ASR is substituted with a sample in Gaussian distribution. For the simulation, we created a list of means starting from 0 up to 0.99, in increments of 0.01, each with a variance of 0.1. This list was extended until it comprised $11,440-1$ and $12,870-1$ elements, to which we added the best arm, characterized by a mean of 1 and a variance of 0.1. This list represents the underlying mean and variance of the arms. Upon playing an arm, a reward is determined by randomly sampling a value, constrained to the range $[0,1]$. With knowledge of the underlying mean, we plotted the cumulative regret over rounds, $R(t) = \sum_{\tau=1}^t (\mu_1-\mu_{k(\tau)})$, where $\mu_1$ is the best arm's mean, and $k(\tau)$ is the arm selected in round $\tau$. These results are presented in Figure~\ref{fig:etsts}.
\begin{figure}[ht]
    \centering
    \subfigure[$N = 11,440$.]{
        \includegraphics[width=0.45\textwidth]{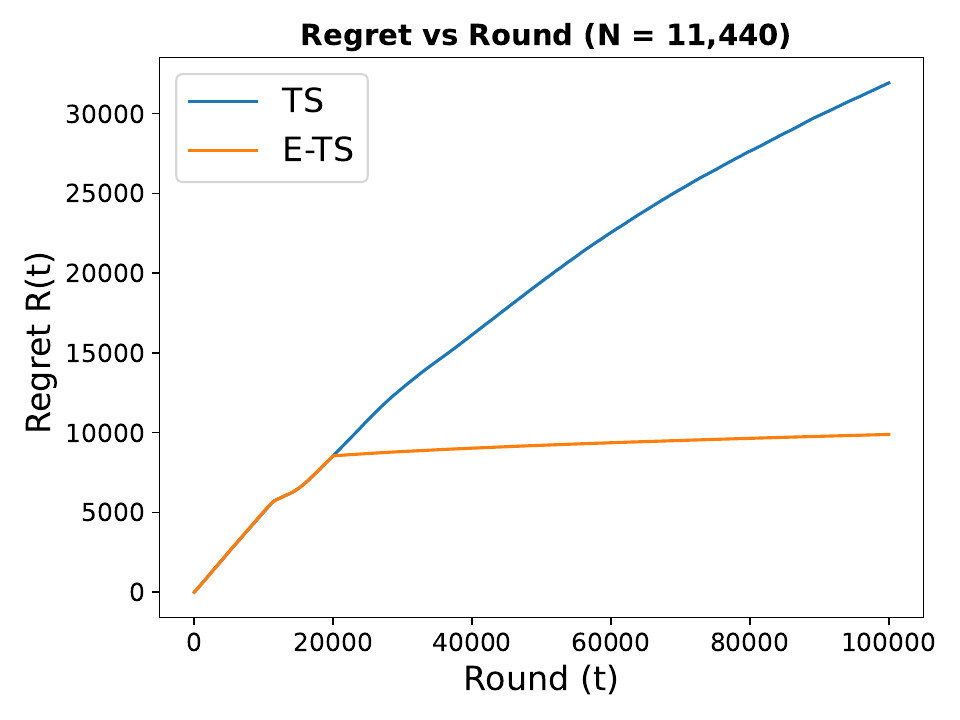}
        \label{fig:11440}
    }
    \subfigure[ $N = 12,870$.]{
        \includegraphics[width=0.45\textwidth]{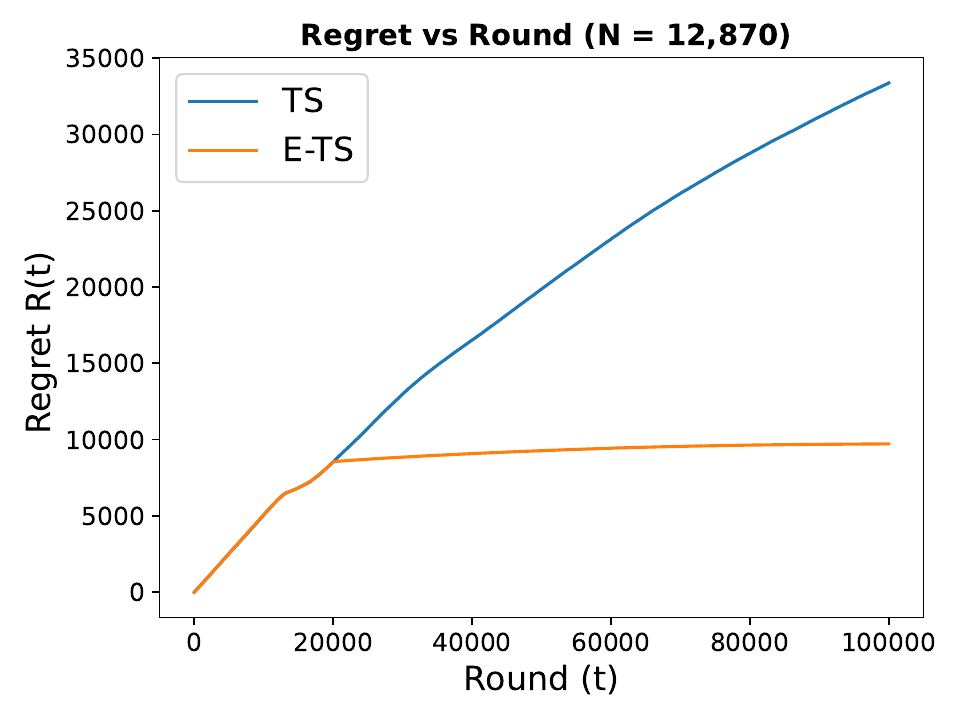}
        \label{fig:12870}
    }
    \caption{Regret in large exploration spaces.}
    \label{fig:etsts}
\end{figure}

The results from Figure~\ref{fig:etsts} reveal that in large exploration spaces, TS struggles to locate the best arm within a limited number of rounds. In contrast, E-TS demonstrates more rapid convergence, further confirming the benefits of utilizing an empirical competitive set in large exploration spaces.

\subsection{The study of optimal choice on the warm-up round $t_0$}\label{warm}

To ascertain the ideal number of warm-up rounds $t_0$ for different arm settings, we conducted numerical experiments with $N = 100$ and $N = 500$. For $N = 100$, we experimented with $t_0 = 150$ (less than $2N$), $t_0 = 200, 300, 500$ (within $[2N, 5N]$), and $t_0 = 800$ (greater than $5N$). Similarly, for $N = 500$, the settings were $t_0 = 750$ (less than $2N$), $t_0 = 1000, 2000, 2500$ (within $[2N, 5N]$), and $t_0 = 4000$ (greater than $5N$).

In these experiments, ASR was replaced with Gaussian distribution samples. We initialized 100 arms with means from 0 to 0.99 (in 0.01 increments) and variances of 0.1. The reward for playing an arm was sampled from its Gaussian distribution. The cumulative regret $R(T)$ was computed as $R(T) = \sum_{t=1}^T (\mu_1 - \mu_{k(t)})$, where $\mu_1 = 0.99$ and $k(t)$ is the arm selected at round $t$, as illustrated in Figure~\ref{fig:t0test}.

\begin{figure}[ht]
    \centering
    
    \subfigure[N = 100.]{
        \includegraphics[width=0.45\textwidth]{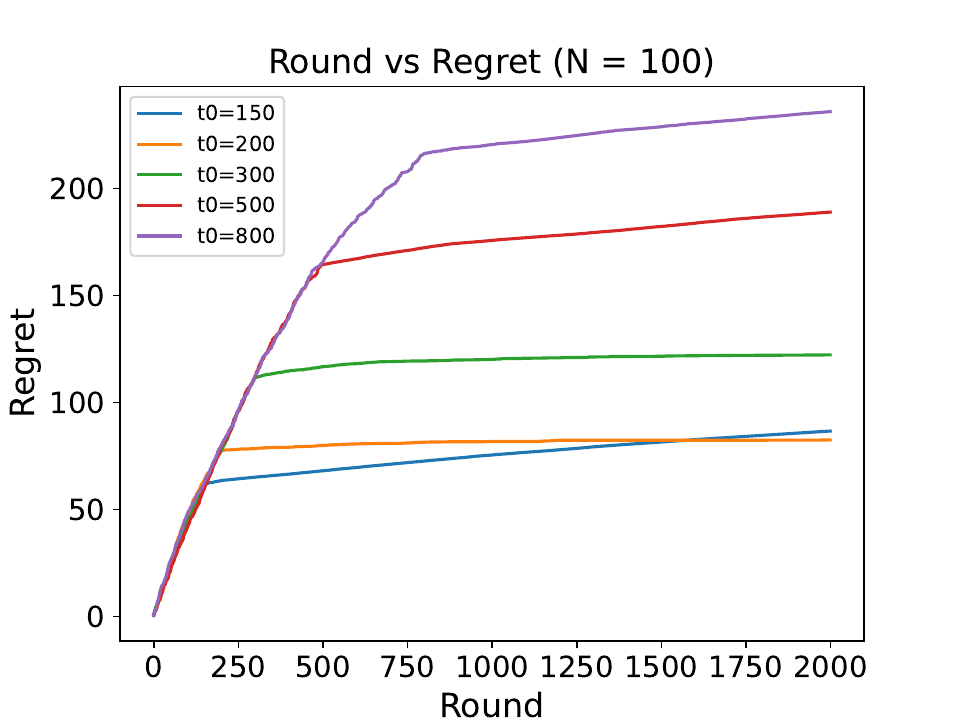}
        \label{fig:9a}
    }
    \subfigure[ N = 500. ]{
        \includegraphics[width=0.45\textwidth]{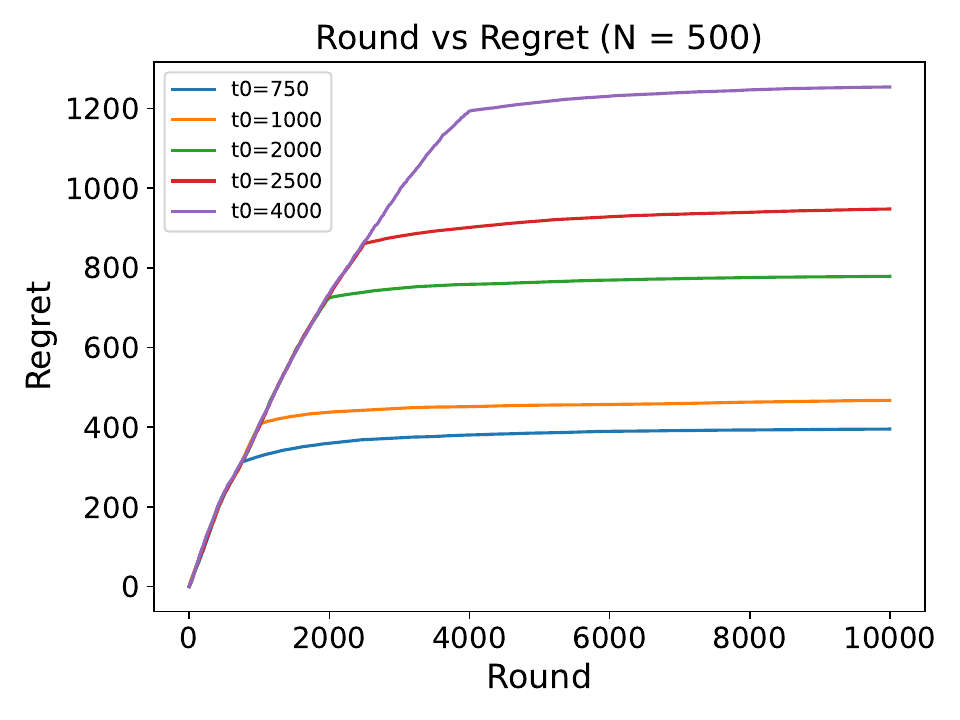}
        \label{fig:9b}
    }
    
    \caption{E-TS performance using different warm-up rounds.}
    \label{fig:t0test}
\end{figure}

Figure~\ref{fig:9a} shows that E-TS converges faster with a smaller $t_0$, but with $t_0 = 150$, it converges to a sub-optimal arm. Figure~\ref{fig:9b} indicates faster convergence with smaller $t_0$. Both figures suggest that $t_0 = 2N$ achieves the most stable and rapid convergence, while $t_0 > 5N$ results in the slowest convergence rate. Analyzing the pull frequencies of each arm during $t_0$, we find that with $t_0 < 2N$, most arms are pulled only once, and some are never explored. Conversely, with $t_0 \in [2N, 5N]$, most arms are pulled at least twice, yielding a more reliable estimation of their prior distributions.

Thus, we recommend setting $t_0$ to at least $N$, with the optimal range being $[2N, 5N]$ in practical scenarios. This range ensures that each arm is sampled at least twice using TS, enabling a more accurate initial assessment of each arm's prior distribution. Such preliminary knowledge is vital for E-TS to effectively form an empirical competitive set of arms. If $t_0$ is too small, there's an increased risk of E-TS prematurely converging on a suboptimal arm due to inadequate initial data, possibly overlooking the best arm in the empirical competitive set.

\end{document}